\newcommand{\mintinline}[2]{\texttt{\detokenize{#2}}}
\let\autoref\Cref
\theoremstyle{definition}
\newtheorem{lemma}{Lemma}
\newtheorem{theorem}{Theorem}
\newtheorem{definition}{Definition}
\newcommand{\act}{\mathbin{\bm{\cdot}}}
\newcommand{\vs}{\mathbf{s}}
\newcommand{\vd}{\mathbf{d}}
\newcommand{\reals}{\mathbb{R}}
\newcommand{\qs}{\mathbf{qs}}
\newcommand{\ks}{\mathbf{ks}}
\newcommand{\ve}{\mathbf{e}}
\newcommand{\vf}{\mathbf{f}}
\newcommand{\eqdist}{\stackrel{d}{=}}
\newcommand{\naturals}{\mathbb{N}}
\newcommand{\given}{\mid}
\renewcommand{\P}{\mathbb{P}}
\newcommand{\dmu}{\,\mathrm{d}\mu}
\newcommand{\calF}{\mathcal{F}}
\newcommand{\calB}{\mathcal{B}}
\newcommand{\vx}{\mathbf{x}}
\newcommand{\vt}{\mathbf{t}}
\begin{document}
\runningtitle{Scalable Spatiotemporal Inference with BSA-TNP}
\runningauthor{Jenson et al.}

%

%

\twocolumn[

\aistatstitle{Scalable Spatiotemporal Inference with Biased Scan Attention Transformer Neural Processes}

\aistatsauthor{ Daniel Jenson\textsuperscript{1} \And Jhonathan Navott\textsuperscript{2,1} \And  Piotr Grynfelder\textsuperscript{1} \And Mengyan Zhang\textsuperscript{3,1} \AND Makkunda Sharma\textsuperscript{1} \And Elizaveta Semenova\textsuperscript{2} \And Seth Flaxman\textsuperscript{1} }
\vspace{0.5em}
\aistatsaddress{\textsuperscript{1}University of Oxford\quad\textsuperscript{2}Imperial College London\quad\textsuperscript{3}University of Bristol
}
]

\begin{abstract}
Neural Processes (NPs) are a rapidly evolving class of models designed to directly model the posterior predictive distribution of stochastic processes. While early architectures were developed primarily as a scalable alternative to Gaussian Processes (GPs), modern NPs tackle far more complex and data-hungry applications spanning geology, epidemiology, climate, and robotics. These applications have placed increasing pressure on the scalability of these models, with many architectures compromising accuracy for scalability. In this paper, we demonstrate that this trade-off is often unnecessary, particularly when modeling fully or partially translation-invariant processes. We propose a versatile new architecture, the Biased Scan Attention Transformer Neural Process (BSA-TNP), which introduces Kernel Regression Blocks (KRBlocks), group-invariant attention biases, and memory-efficient Biased Scan Attention (BSA). BSA-TNP is able to: (1) match or exceed the accuracy of the best models while often training in a fraction of the time, (2) exhibit translation invariance, enabling learning at multiple resolutions simultaneously, (3) transparently model processes that evolve in both space and time, (4) support high-dimensional fixed effects, and (5) scale gracefully, running inference on over 1M test points and 100K context points in under a minute on a single 24GB GPU. Code is provided as part of the \href{https://github.com/MLGlobalHealth/dl4bi}{\texttt{dl4bi}} package.
\end{abstract}

\section{INTRODUCTION}\label{sec:introduction}
While early Neural Processes (NPs) \citep{np, cnp} focused primarily on the posterior predictive distributions of 1D Gaussian Processes (GPs) and simple data distributions like MNIST, many modern NPs tackle far more expansive and complex distributions spanning geology, epidemiology, climate, and robotics \citep{geoai,stgnp,convcnp_env,convcnp_env_2,anp_robotics}. These tasks often require combining on-grid and off-grid data. Indeed, one of the most recent foundation models for climate, Aardvark, is an NP that synthesizes temperature, pressure, wind, humidity, and precipitation from on-grid and off-grid data to generate forecasts that outperform traditional numerical weather prediction systems at one thousandth the computational cost \citep{aardvark}.

As NPs grow in scope, there are two competing pressures: scale and accuracy. Scale is particularly salient for applications involving high-resolution sensor data, e.g.~satellite imagery or LiDAR point clouds. On the other hand, many of these predictions need to be locally accurate to be actionable, e.g.~city-level disaster preparedness. NPs should be simple, extensible, and computationally tractable to enable widespread use across domains with limited resources. Accordingly, we propose BSA-TNP, a model that effectively balances these desiderata. Our contributions include:

\begin{itemize}
\item The Kernel Regression Block (KRBlock): a simple, stackable transformer block that is parameter-efficient, sharing weights for query and key updates, and easily extensible, supporting an arbitrary number of attention biases derived from subsets of input features, e.g.~space, time, and fixed effects (observed covariates), such as elevation, weather, or categorical metadata.
\item Group-invariant (\textit{G}-invariant) attention biases: we present a general form of attention bias based on group-invariant actions that operate on subsets of the model input and enable specifying constraints similar to Bayesian priors. For spatiotemporal applications, we focus on translation invariance as a special case, which accelerates convergence and improves performance and generalization across locations and resolutions.
\item Biased Scan Attention (BSA), a novel memory-efficient attention mechanism that accommodates custom, high-performance bias functions through compiled JAX functions.
\end{itemize}

Together, these allow BSA-TNP to: (1) match or exceed the accuracy of the best NP models while often training in a fraction of the time, (2) exhibit translation invariance, enabling learning at multiple resolutions simultaneously, (3) transparently model processes that evolve in both space and time, (4) support high-dimensional fixed effects, and (5) scale gracefully, running inference on over 1M test points and 100K context points in under a minute on a single 24GB GPU.

\section{BACKGROUND}\label{sec:background}
\subsection{Neural Processes}
At a high level, a stochastic process is defined as a collection of random variables ${f(s): s \in \mathcal{S}}$ indexed by $\mathcal{S}$. For a finite subset of indices $\mathbf{s}$ (``locations''), we denote the corresponding function values by $\mathbf{f}$. In Neural Processes (NPs), points are further divided into context and target sets, $(\mathbf{s}_c, \mathbf{f}_c)$ and $(\mathbf{s}_t, \mathbf{f}_t)$, respectively. Conditional Neural Processes (CNPs) are a subclass of NPs that encode the context set deterministically into a fixed representation, $\mathbf{r}_c = f_\text{enc}(\mathbf{s}_c,\mathbf{f}_c)$. This representation is then decoded at the target locations to generate the parameters of the output distribution, $\theta_t = f_\text{dec}(\mathbf{r}_c, \mathbf{s}_t)$. CNPs factorize conditionally on $\mathbf{r}_c$, i.e. $p(\mathbf{f}_t \mid \mathbf{s}_c,\mathbf{f}_c,\mathbf{s}_t) = \prod_i p(\mathbf{f}_t^{(i)} \mid \mathbf{r}_c,\mathbf{s}_t^{(i)})$ where $(\mathbf{f}_t^{(i)}, \mathbf{s}_t^{(i)})$ are individual test points, and are trained by maximizing the log-likelihood of the data under this predicted distribution.

\subsection{Transformer Neural Processes}
Transformer Neural Processes (TNPs) \citep{tnp} are CNPs that have recently dominated the NP landscape in terms of accuracy and uncertainty calibration. Most transformers consist of an embedding layer, several transformer blocks, and a prediction head. The standard transformer block consists of multi-headed attention followed by a feedforward network, interspersed with residual connections. The attention mechanism \citep{attention} projects its input into three distinct matrices corresponding to queries ($\mathbf{Q}$), keys ($\mathbf{K}$), and values ($\mathbf{V}$). The queries are matched against keys using a kernel, $\mathcal{K}$, and the resulting scores are used as weights for combining the associated values. The most common attention kernel is the dot-product softmax kernel, i.e. $\mathcal{K}(\mathbf{Q},\mathbf{K})\mathbf{V}\coloneqq\text{softmax}(\mathbf{QK}^\intercal/\sqrt{d_k})\mathbf{V}$ where $d_k$ is the embedding dimension of keys.

Unlike other CNPs which typically follow an encode-decode framework, test points are often passed to TNPs alongside context points and allowed to interact with internal representations of context points at multiple levels within the transformer (see \autoref{fig:architecture}). The original TNP, i.e.~TNP-Diagonal (TNP-D) \citep{tnp}, consisted solely of encoder blocks with a special mask, while later TNPs used alternating self and cross-attention blocks as detailed in \autoref{subsec:scaling-attention}.

\subsection{Scaling Attention}\label{subsec:scaling-attention}
Standard attention mechanisms have a space and time complexity of $\mathcal{O}(n^2)$, which presents a challenge as the number of tokens or points, $n$, increases. There are five broad categories of research that attempt to address this: (1) sparsity \citep{sparse,longformer,bigbird,reformer,sinkhorn}, (2) inducing points \citep{perceiver,settransformer}, (3) low-rank approximations \citep{linformer,performer,efficient}, (4) reuse or sharing \citep{lazyformer,reusetransformer}, and (5) optimized memory and/or hardware code \citep{memory_efficient_attention,flash1,flash2,flash3,flex}.

Most contemporary NPs, such as Latent Bottlenecked Attentive Neural Processes \citep{lbanp}, Memory Efficient Neural Processes via Constant Memory Attention Blocks \citep{cmab}, Pseudo Token Translation Equivariant Transformer Neural Processes \citep{pttnp}, and Gridded Transformer Neural Processes \citep{gridtnp}, focus on (2) inducing points and use some form of Perceiver or Set Transformer attention \citep{perceiver,settransformer}, which introduce a number of latent inducing points, $k$, often much smaller than either the number of context points, $n_c$, or test points, $n_t$. While inducing points enable more scalable training, they introduce additional complexities, such as choosing the number of latents as well as their locations, which in turn determine the accuracy of model output. In contrast, BSA-TNP focuses on (5), memory and hardware-efficient attention (see \autoref{subsec:bsa}), which is more accurate and often faster in practice.
\subsection{Attention Bias}
Attention bias, $\mathbf{B}$, injects domain knowledge into the attention kernel and takes the following form:
\begin{equation}\label{biased_attention}
\mathcal{K}(\mathbf{Q},\mathbf{K})\mathbf{V}\coloneqq\text{softmax}(\mathbf{QK}^\intercal/\sqrt{d_k}+\mathbf{B})\mathbf{V}
\end{equation}
Graph Neural Networks often leverage attention bias to encode graph topology at various scales, e.g.~the Graphormer \citep{graphormer} encodes spatial and structural biases using the shortest path and node centrality statistics. Similarly, in large language modeling, \cite{alibi} introduced Attention with Linear Biases (ALiBi), which replaces positional embeddings with a bias that is proportional to the distance between tokens, i.e. $b(i,j)=-m\cdot|i-j|$ where $m$ is a fixed or learnable scalar and $i$ and $j$ are token positions. ALiBi matches the performance of sinusoidal embeddings while allowing the model to extrapolate far beyond its training range. The matrix $\mathbf{B}$ can be static or dynamic with learnable parameters, as is the case in BSA-TNP (see \autoref{subsec:g-invariant-bias}).

\section{GROUP INVARIANCE}\label{sec:G-invariance}

Learning meaningful functions from limited data is inherently difficult without strong inductive biases. For Neural Processes (NPs), the lack of structural assumptions forces the model to search over a vast, unconstrained hypothesis space, often resulting in poor generalization and extrapolation. This issue becomes especially severe when the target function possesses symmetries that the model fails to exploit, for example in the case of stationary distributions.

\begin{definition}\label{def:strictly-stationary}
A stochastic process $F(d)$, $d \in \mathcal{D} = \reals^m$ is said to be \textbf{strictly stationary} if all its finite-dimensional marginal distributions are translation invariant. That is, the equality in distribution 
\[
\left(F(d_1), \ldots, F(d_k)\right) \eqdist \left(F(d_1 + \tau), \ldots, F(d_k + \tau)\right)
\]
holds for all $k \in \naturals$, $d_1, \ldots, d_k \in \mathcal{D}$, and $\tau \in \mathcal{D}$ and $\eqdist$ signifies equality in distribution.
\end{definition}

This notion of stationarity can be extended to a broader class of symmetries by considering a group $G$ acting on the input space $\mathcal{D}$. The space $\mathcal{D}$ may include not only spatial coordinates $s\in \mathcal{S}$, but also temporal indices $t\in\mathcal{T}$ and fixed effects $x\in \mathcal{X}$, where fixed effects are observed covariates attached to each point and conditioned on by the model. For example, one may take $\mathcal{D}=\mathcal{X}\times\mathcal{S}\times\mathcal{T}$. Such a generalization allows us to capture invariances beyond translations, such as rotations, time shifts, scaling transformations, or permutations of categorical attributes, depending on the structure of $G$ and its action on $\mathcal{D}$. Let $\act : G \times \mathcal{D}\rightarrow \mathcal{D}$ be a group action, written as $g \act d$ for $g
\in G, d \in \mathcal{D}$, extending the notion to tuples over $\mathcal{D}$ by applying the action
element-wise.
\begin{definition}
A function $f$ over $\mathcal{D}$ is  \textbf{$G$-invariant} if $f (g \act \vd) = f(\vd)$
for all $\vd \in \mathcal{D}^k$ and $g \in G$.
\end{definition}
\begin{definition}\label{def:g-stationary}
A stochastic process $F$ is \textbf{$G$-stationary} if all its finite-dimensional
marginal distributions are $G$-invariant, that is $F(\vd) \eqdist F(g \act \vd)$
holds for all $\vd \in \mathcal{D}^k$ and $g \in G$. 
\end{definition}

Note that \autoref{def:strictly-stationary} is special case of \autoref{def:g-stationary}. To understand how $G$-invariance can be incorporated into an NP, we examine the structure of the posterior predictive map $\pi \colon (\vd_c, \vf_c, \vd_t) \mapsto (F(\vd_t) \given F(\vd_c) = \vf_c)$ where $\mathbf{d}_c$ represents the index set of context points, $\mathbf{d}_t$ the index set of test points, and $\mathbf{f}_c$ the function values of the context points. This map captures the conditional distribution that an NP aims to approximate. Note that this definition of a posterior predictive map differs slightly from previous works in order to be consistent with the functional definition of TNPs; this has implications for the use of the terms invariance and equivariance, which we detail in \autoref{appendix:invariance-vs-equivariance}.

The following result, which generalizes the theorem from \cite{pttnp} beyond translations to arbitrary group actions, formalizes the connection between $G$-stationarity of the underlying process and the symmetries of the prediction map:

\begin{theorem}\label{theorem:iff}
A stochastic process $F$ is $G$-stationary if and only if the posterior predictive map $\pi$ is $G$-invariant with respect to the context and target inputs, i.e.,
$
\pi(g \act \vd_c, \vf_c, g \act \vd_t) = \pi(\vd_c, \vf_c, \vd_t)
$,
for all $g \in G$ and all $\vd_c, \vf_c, \vd_t$.
\end{theorem}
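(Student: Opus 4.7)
My plan is to prove both directions by appealing to the standard fact that the joint law of $(F(\vd_c), F(\vd_t))$ factorizes uniquely into the marginal of $F(\vd_c)$ and the regular conditional law of $F(\vd_t)$ given $F(\vd_c)=\vf_c$, which is exactly $\pi(\vd_c, \vf_c, \vd_t)$. Each direction then reduces to transferring invariance between the joint distribution and its conditional factor.

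For the forward direction ($F$ is $G$-stationary $\Rightarrow$ $\pi$ is $G$-invariant), I would fix an arbitrary $g \in G$ and tuples $\vd_c, \vd_t$. Applying the $G$-stationarity hypothesis to the concatenated tuple $(\vd_c, \vd_t)$ yields the joint equality in distribution $(F(\vd_c), F(\vd_t)) \eqdist (F(g \act \vd_c), F(g \act \vd_t))$. Because the regular conditional distribution is uniquely determined (up to a $\vf_c$-null set) by the joint law, the posterior predictive maps $\pi(\vd_c, \cdot, \vd_t)$ and $\pi(g \act \vd_c, \cdot, g \act \vd_t)$ must coincide almost everywhere in $\vf_c$; under the mild continuity assumptions standard for NP target distributions, this upgrades to pointwise equality in $\vf_c$.

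For the backward direction ($\pi$ is $G$-invariant $\Rightarrow$ $F$ is $G$-stationary), my plan is to recover marginal invariance by specializing the hypothesis to an empty context, $\vd_c = \emptyset$. Then $\pi(\emptyset, \emptyset, \vd_t)$ reduces to the prior predictive, i.e.~the finite-dimensional law of $F(\vd_t)$, and $G$-invariance of $\pi$ reads $F(\vd_t) \eqdist F(g \act \vd_t)$, which is exactly $G$-stationarity. If the empty-context case is excluded from the formal definition of $\pi$, one can instead bootstrap: the marginal of $F(\vd_t)$ can be reconstructed as $\int \pi(\vd_c, \vf_c, \vd_t)\,\mu_c(d\vf_c)$ with $\mu_c$ the marginal of $F(\vd_c)$, and an induction on the size of the tuple propagates invariance from singleton marginals to arbitrary finite-dimensional marginals.

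The hard part will be the measure-theoretic book-keeping around regular conditional distributions---in particular, ensuring that ``almost everywhere in $\vf_c$'' equalities can be promoted to the pointwise statement demanded by the theorem, and handling the empty/singleton context boundary case cleanly. Nothing in the argument exploits specific structure of $G$ beyond it acting measurably on $D^k$, which is precisely why the generalization from the translation group treated in \cite{pttnp} to an arbitrary group action is essentially free.
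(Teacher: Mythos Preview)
Your proposal is correct and follows essentially the same route as the paper: the forward direction passes through equality of the joint law $(F(\vd_c),F(\vd_t))\eqdist(F(g\act\vd_c),F(g\act\vd_t))$ and then invokes a.e.\ uniqueness of regular conditional distributions (the paper packages the ``for all $A$, a.e.\ $\vf_c$'' $\to$ ``a.e.\ $\vf_c$, for all $A$'' swap into two short lemmas), while the backward direction is exactly your empty-context specialization. The one minor discrepancy is that the paper does not upgrade the a.e.\ equality to pointwise via continuity---it simply observes that conditional laws are only defined up to $\mu_{\vd_c}$-null sets, so a.e.\ equality \emph{is} the conclusion.
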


This result provides a direct prescription for incorporating invariances into NPs: \textit{if the model is designed such that its predictive map is $G$-invariant in the inputs, then it correctly reflects the $G$-stationarity of the process it aims to model}. It generalizes the translation result of \cite{pttnp} to arbitrary group actions and is closely related to Proposition~1 of \cite{aenp}; our formulation is stated for the TNP posterior predictive map $\pi(\vd_c,\vf_c,\vd_t)$, so the relevant symmetry is joint $G$-invariance in the context and target inputs rather than $G$-equivariance of an encoder-decoder parameter map. This enables us to model multiple distinct $G$-invariances simultaneously within the same model. The full proof can be found in \autoref{appendix:G-inv-iff-proof}. Since many spatiotemporal tasks exhibit some degree of translation invariance, we focus primarily on that group action in our main benchmarks, but we also include a rotational-invariance example on spherical GPs in \autoref{subsec:gp-spherical}; full experimental details are deferred to \autoref{appendix:exp}.

\section{BIASED SCAN ATTENTION TRANSFORMER NEURAL PROCESS (BSA-TNP)}\label{sec:bsa-tnp}
\begin{figure*}[t]  
  \centering
  \includegraphics[width=0.80\linewidth]{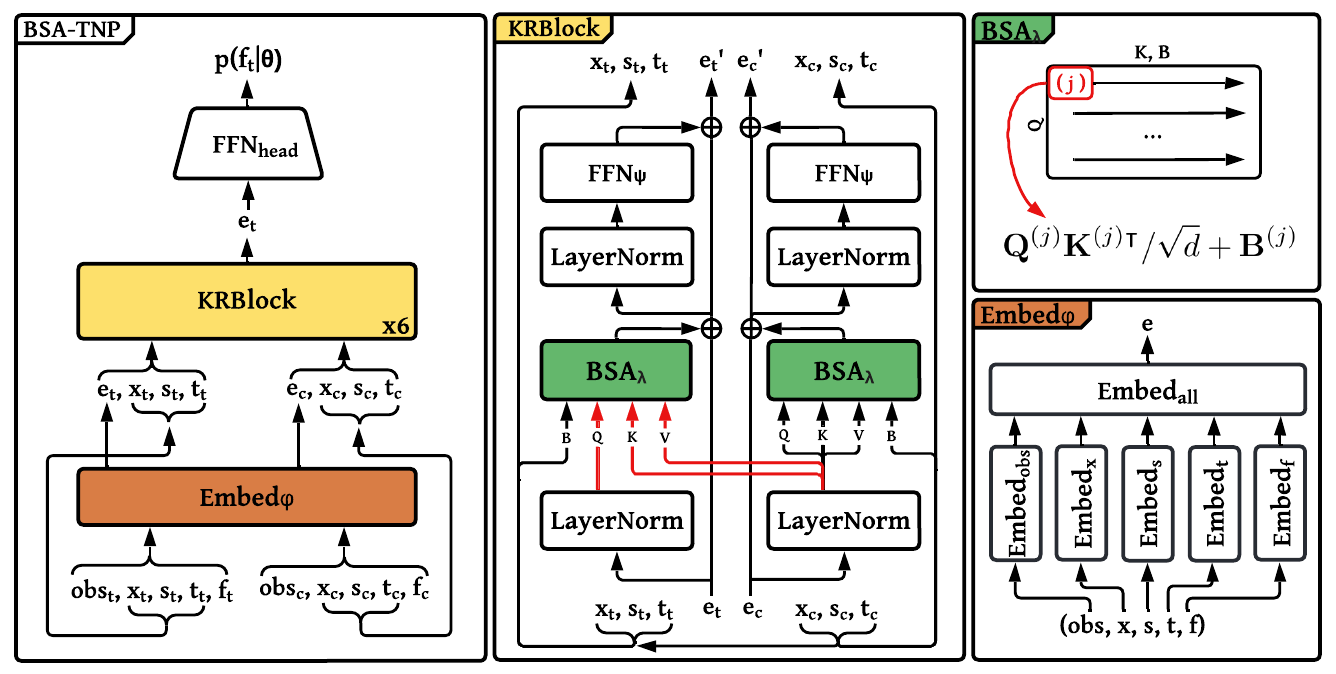}
  \caption{BSA-TNP Overview. The leftmost panel contains the high level architecture (BSA-TNP), the center panel the Kernel Regression Block (KRBlock), and the rightmost panels contain illustrations of Biased Scan Attention (BSA) and the embedding network. Subscripts in block names identify collections of parameters.}
  \label{fig:architecture}
\end{figure*}
The Biased Scan Attention Transformer Neural Process (BSA-TNP) architecture consists of an embedding layer, a stack of KRBlocks, and a prediction head (leftmost panel in \autoref{fig:architecture}). While prior NP literature primarily focused on spatial processes involving only $\mathbf{s}$ as the index set, we expand this to $\mathbf{d}=(\mathbf{x}, \mathbf{s}, \mathbf{t})$ where $\mathbf{x}$ consists of fixed effects, i.e. observed covariates associated with each spatiotemporal location, $\mathbf{s}$ consists of spatial locations, and $\mathbf{t}$ consists of temporal indices. Furthermore, $\mathbf{d}_c$ consists of the index set or features associated with context points and $\mathbf{d}_t$ consists of the index set or features associated with test points. Thus, the posterior predictive map for BSA-TNP can be written as: $\pi: (\mathbf{d}_c, \mathbf{f}_c, \mathbf{d}_t)\mapsto (F(\mathbf{d}_t)\mid F(\mathbf{d}_c)=\mathbf{f}_c)$ or $\pi(\mathbf{x}_c, \mathbf{s}_c, \mathbf{t}_c, \mathbf{f}_c, \mathbf{x}_t, \mathbf{s}_t, \mathbf{t}_t)\mapsto (F(\mathbf{x}_t,\mathbf{s}_t, \mathbf{t}_t)\mid F(\mathbf{x}_c,\mathbf{s}_c,\mathbf{t}_c)=\mathbf{f}_c)$.

The inputs to BSA-TNP consist of a boolean indicator \textbf{obs}, which enables the model to differentiate context (observed) from test (unobserved) points, the index set $\mathbf{d}=(\mathbf{x},\mathbf{s},\mathbf{t})$, and the function values $\mathbf{f}$ for both context and test points (see panel 1 in \autoref{fig:architecture}). The function values for test points, $\mathbf{f}_t$, are initialized to zero, similar to previous work \citep{tnp,pttnp}. The model outputs the parameters, $\boldsymbol{\theta}$, of the distribution at test points, $p(\mathbf{f}_t\mid\boldsymbol{\theta})$, e.g. a Gaussian for continuous outputs.

\subsection{Embedding}
The embedding layer consists of two stages. First, each element of the 5-tuple is embedded separately, e.g.~$\textbf{Embed}_\mathbf{x}(\mathbf{x})=\mathbf{e}_x$. Sometimes the embedding function is the null function, e.g.~$\textbf{Embed}_\mathbf{s}(\mathbf{s})=\mathbf{e}_s=\emptyset$, which is required in order to achieve the desired $G$-invariances (see \autoref{subsec:g-invariant-bias}). After applying the individual embedding functions, which are most often an MLP or the null function, the representations are fused with a joint embedding network, $\textbf{Embed}_\textbf{all}(\mathbf{e}_\textbf{obs}, \mathbf{e}_x, \mathbf{e}_s, \mathbf{e}_t, \mathbf{e}_f)\mapsto \mathbf{e}$. This joint embedding, $\mathbf{e}$, is passed along with the unaltered index set, $\mathbf{d}=(\mathbf{x},\mathbf{s},\mathbf{t})$, to the KRBlocks.

\subsection{Kernel Regression Block (KRBlock)}\label{subsec:krblock}
A Kernel Regression Block (KRBlock) is a generic transformer block inspired by Nadaraya-Watson kernel regression \citep{nw_regression}. It consists of two computational pathways -- one for the context points and another for test points (center panel in \autoref{fig:architecture}). Both of these pathways share subnetworks and parameters, but the attention calculation differs for each. In the context point pathway, context points only attend to other context points. In the test point pathway, test points only attend to context points. Stacking KRBlocks allows the model to perform iterative kernel regression on increasingly complex internal representations of points. The inputs are joint embeddings and unaltered index sets for context and test points. The embeddings are updated by the block and the index set is used to calculate attention bias. The output consists of updated embeddings and the unaltered index sets, which are used by the next KRBlock to calculate different learned attention biases. KRBlocks have complexity $\mathcal{O}(n_c^2 + n_cn_t)$ where $n_c$ is the number of context points and $n_t$ is the number of test points.

\subsection{$G$-invariant attention bias}\label{subsec:g-invariant-bias}
Recall that the purpose of attention bias is to introduce priors based on domain knowledge that constrain the search space. In spatiotemporal Bayesian inference, this prior often takes the form of a Gaussian Process (GP) prior, specified by a mean function and a covariance kernel whose parameters control the smoothness and range of spatial or temporal dependence between points. For example, since we know that a sick person cannot infect a healthy one 10 miles away, we might use a radial basis function (RBF) kernel operating over their pairwise distance with a lengthscale of 5 feet, which suggests that most transmission happens in close proximity. Priors like this are also translation-invariant, i.e. only the distance between two points matters, rather than their absolute positions. We would like our model to leverage important priors like this and any other relevant $G$-invariant operations such as temporal invariance, rotational invariance, and scale invariance.

Furthermore, when a data distribution exhibits some form of \textit{G}-invariance, this allows the model to exploit structure and extrapolate beyond the training region. This means that these models can be trained on comparatively small input domains and applied to much larger ones at inference time (see \autoref{subsec:gp-bench}). Thus, since training can be calibrated to the available compute, this shifts the primary focus to inference.

BSA-TNP can be configured to be $G$-invariant (see Section~\ref{sec:G-invariance}), as formalized in \autoref{theorem:BSA-inv}. Once again, let $\vd = (\vx, \vs, \vt)$ denote the tuple of fixed effects, spatial locations, and temporal indices, and let $\vd_c$, $\vd_t$ be the context and target index sets, respectively.

\begin{theorem}\label{theorem:BSA-inv}
BSA-TNP is $G$-invariant in $(\vd_c, \vd_t)$ if both the attention bias functions and the embedding functions are $G$-invariant in $(\vd_c, \vd_t)$.
\end{theorem}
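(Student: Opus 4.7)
The plan is to proceed by structural induction over the layers of BSA-TNP, showing that every intermediate embedding produced by the network is $G$-invariant in $(\vd_c,\vd_t)$ under the simultaneous action of $g$. Once this is established, the prediction head acts only on the final invariant embeddings at the test locations, so the output distribution $p(\vf_\text{test}\mid\boldsymbol{\theta})$ is invariant, which is exactly the conclusion of the theorem.

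For the base case, the initial embeddings $\ve_c$ and $\ve_t$ emitted by $\textbf{Embed}_{\boldsymbol{\varphi}}$ are $G$-invariant in $(\vd_c,\vd_t)$ by assumption. For the inductive step I would consider a single KRBlock, assuming the incoming embeddings $\ve^{(\ell)}$ are invariant. Queries, keys, and values, being linear projections of $\ve^{(\ell)}$, are then invariant as well. Inside the block the biased attention kernel takes the form $\exp\bigl(\vq_i^\top \vk_j/\sqrt{d_k} + b(d_i,d_j)\bigr)$, where $b$ belongs to the collection of bias functions on subsets of $(\vx,\vs,\vt)$; by hypothesis each satisfies $b(g\act d_i, g\act d_j) = b(d_i,d_j)$. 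Consequently every unnormalized score is invariant, the row-wise softmax preserves this invariance, and the block's output is a convex combination of invariant values with invariant weights, hence invariant. The feedforward, residual, and normalization sublayers depend only on these invariant outputs and therefore preserve invariance as we ascend the stack.

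The main technical subtlety, and what I expect to be the chief care point, is the residual pass-through of the raw features $(\vx,\vs,\vt)$ around each KRBlock: after applying $g$ these features are genuinely transformed at every block, so invariance cannot be propagated pointwise for them. The argument must instead leverage the fact that they only ever enter the forward pass as arguments to the $G$-invariant bias functions, so their net contribution to each attention score is invariant at every layer. Once this is spelled out carefully for one KRBlock, the induction closes and composing with the (invariant-input) prediction head gives the theorem.
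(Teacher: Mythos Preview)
Your proposal is correct and follows essentially the same approach as the paper: a forward trace through the architecture (embedding, then KRBlocks, then prediction head), showing that each component preserves $G$-invariance of the embeddings given that the bias functions and $\textbf{Embed}_{\boldsymbol{\varphi}}$ are $G$-invariant. Your explicit treatment of the residual pass-through of the raw features $(\vx,\vs,\vt)$---noting that they are genuinely transformed but only ever feed into the $G$-invariant bias---is a nice touch that the paper's proof leaves more implicit.
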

Here, the group action $g \in G$ may act on any subset of components in $\vd$, such as scaling household income in \textbf{x}, rotating spatial coordinates in \textbf{s}, translating time indices in \textbf{t}, or any combination thereof. For example, if $G$ represents spatial translation, then the action is defined as $g \act \vd = (\vx, g \act \vs, \vt)$. In terms of BSA-TNP's posterior predictive map, this would be $\pi(\mathbf{x}_c, g\act \mathbf{s}_c, \mathbf{t}_c, \mathbf{f}_c, \mathbf{x}_t, g\act\mathbf{s}_t, \mathbf{t}_t) \mapsto (F(\mathbf{x}_t,\mathbf{s}_t, \mathbf{t}_t)\mid F(\mathbf{x}_c,\mathbf{s}_c,\mathbf{t}_t)=\mathbf{f}_c)$. This means that translating all spatial locations in the input would result in an identical posterior predictive distribution, i.e. it would be translation invariant.

Assuming the embedding function is $G$-invariant, i.e. it does not encode any information about absolute position, scale, time, etc., only the attention bias must also be $G$-invariant in order for BSA-TNP to be $G$-invariant as per \autoref{theorem:BSA-inv}. A proof is provided in \autoref{appendix:BSA-inv-proof}. Thus, in BSA-TNP, we define $G$-invariant attention bias as any combination, $\Phi$, of $G$-invariant functions, $b$, over elements of the index sets, $d_i=(x_i,s_i,t_i)$ and $d_j=(x_j,s_j,t_j)$:
\begin{equation}{\label{eqn:g-inv-bias}}
\mathbf{B}_{ij}=\Phi(b_1(d_i,d_j), b_2(d_i,d_j), \ldots, b_n(d_i, d_j))
\end{equation}
In practice, $\Phi$ is often a weighted linear sum and each $b$ acts on a separate subset of the indices. For example, if $b_s$ represents a spatially translation-invariant function and $b_t$ represents a temporally translation-invariant function, this might be implemented as:
\begin{equation}\label{eqn:g-inv-bias-example}
\mathbf{B}_{ij}=\gamma_s b_s(s_i, s_j) + \gamma_t b_t(t_i, t_j)
\end{equation}
where the weights, $\gamma_s$ and $\gamma_t$ are learnable parameters. Furthermore, $b_s$ and $b_t$ may have learnable parameters and be compositions of other $G$-invariant functions themselves. In fact, for the experiments in this paper, BSA-TNP uses an RBF network for each translation-invariant bias term, since RBF kernels and networks are translation-invariant by definition. Specifically, for a translation-invariant spatial bias, BSA-TNP uses:
\begin{equation}\label{eqn:generic-bias}
b(s_i, s_j)=\sum_{m=1}^M \alpha_m \exp\left(-\beta_m\lVert s_i-s_j\rVert^2\right)
\end{equation}
where $\{\alpha_m,\beta_m\}_{m=1}^M$ are learnable parameters for $M$ basis functions. By default, BSA-TNP uses $M=5$ for space and $M=3$ for time; an ablation of 1, 3, 5, and 10 basis functions showed no benefit for $M>5$ (see \autoref{num-basis-ablation} in \autoref{appendix:extended-results}). This formulation introduces a strong inductive bias with very few parameters. For instance, in the default parameterization of BSA-TNP which has 6 layers, 4 attention heads, 5 basis functions for space, and 3 basis functions for time, this introduces only 384 additional learnable parameters.

While \textit{G}-invariance is most effective for stationary processes, these biases can still prove useful when a process is only partially stationary. For example, when measuring pollution, the distribution is determined by both pollution generation centers and local weather patterns. In this case, it can be advantageous to embed locations \textit{and} use spatial attention bias since locations can act as proxies for cities, which have unique pollution profiles, as well as summarize information about local weather patterns, which affect all locations similarly (see \autoref{subsec:beijing-bench}).

\subsection{Biased Scan Attention (BSA)}\label{subsec:bsa}
In spatiotemporal applications, inference is typically performed over a single large region or a small collection of large regions and the primary computational bottleneck is memory. Fortunately, there has been a wave of innovation in memory-efficient attention variants, e.g.~memory-efficient attention for TPUs \citep{memory_efficient_attention} and Flash Attention 1, 2, and 3 for CUDA and ROCm devices \citep{flash1,flash2,flash3}. Unfortunately, none of these variants support arbitrary bias functions, which means the bias must be omitted or passed as a fully materialized matrix, undermining memory efficiency. Recently, FlexAttention \citep{flex} has provided some support for non-scalar biases, but experiences significant performance regressions during backpropagation (see \autoref{appendix:flex-vs-scan}). Accordingly, we introduce Biased Scan Attention (BSA), a scan-based memory-efficient attention mechanism that supports arbitrary bias functions that take the form of JIT-compiled JAX functions \citep{jax}. BSA largely follows the chunking algorithm defined in Flash Attention 2 \citep{flash2}, but adopts the scan-based tiling and gradient checkpointing of memory efficient attention \citep{memory_efficient_attention}. By using \texttt{jax.lax.scan} with gradient checkpointing, BSA calculates attention scores \textit{and} custom bias terms on the fly with constant memory.

The only requirements of BSA, like its predecessors, are keeping track of the maximum attention score, $m(x)$, the normalization constant, $\ell(x)$, and the unnormalized output, $\mathbf{\tilde{O}}$. For every tile, $j$, \autoref{eqn:bsa_1} is computed. $\mathbf{Q}^{(j)}$ and $\mathbf{K}^{(j)}$ are the queries and keys for that tile and $\mathbf{B}^{(j)}$ is the output of compiled bias functions for that tile. Then for the first tile \autoref{eqn:bsa_2} is computed, and for subsequent tiles, $j>1$, \autoref{eqn:bsa_3} is computed.
\begin{equation}\label{eqn:bsa_1}
\begin{split}
x^{(j)}&=\mathbf{Q}^{(j)}\mathbf{K}^{(j)\intercal}/\sqrt{d_k}+\mathbf{B}^{(j)} \\
f(x)^{(j)}&=e^{x^{(j)}-m(x)^{(j)}\mathbf{1}^\intercal} \\
\end{split}
\end{equation}
\begin{equation}\label{eqn:bsa_2}
\begin{split}
m(x)^{(1)}&= \operatorname{rowmax}(x^{(1)}) \\
\ell(x)^{(1)}&= \operatorname{rowsum}(f(x)^{(1)}) \\
\mathbf{\tilde{O}}^{(1)}&=f(x)^{(1)}\mathbf{V}^{(1)}\\
\end{split}
\end{equation}
\begin{equation}\label{eqn:bsa_3}
\begin{split}
m(x)^{(j>1)}&= \max\left(m(x)^{(j-1)}, \operatorname{rowmax}(x^{(j)})\right) \\
k^{(j>1)}&= e^{m(x)^{(j-1)}-m(x)^{(j)}} \\
\ell(x)^{(j>1)}&= k^{(j>1)}\ell(x)^{(j-1)}+\operatorname{rowsum}(f(x)^{(j)}) \\
\mathbf{\tilde{O}}^{(j>1)}&=k^{(j>1)}\mathbf{\tilde{O}}^{(j-1)} + f(x)^{(j)}\mathbf{V}^{(j)} \\
\end{split}
\end{equation}

In short, as each block of size $B$ is processed, three updates occur: (1) the maximum score, $m(x)^{(j)}$, is updated, (2) the normalization constants, $\ell(x)^{(j)}$, are rescaled and updated, and (3) the unnormalized output, $\mathbf{\tilde{O}}^{(j)}$, is rescaled and updated. In the final step, the output is normalized by the final row sums, $\mathbf{O}^{(n)}=\operatorname{diag}(\ell(x)^{(n)})^{-1}\mathbf{\tilde{O}}^{(n)}$.

\section{EXPERIMENTS}\label{sec:experiments}
In this section, we evaluate BSA-TNP, Convolutional Conditional Neural Processes (ConvCNP) \citep{convcnp}, Pseudo Token Translation Equivariant Neural Processes (PT-TE-TNP) \citep{pttnp}, and the original Transformer Neural Process (TNP-D) \citep{tnp} on a collection of synthetic and real-world data. The objective of these experiments, which mostly focus on spatiotemporal applications, is to demonstrate the importance of \textit{G}-invariance, particularly translation invariance in space and time. All models have approximately 500K parameters, except PT-TE-TNP, which has approximately 1.8M. We provide detailed parameterizations in \autoref{appendix:model-param} and model complexity, estimated FLOPs, train times, and inference times in \autoref{appendix:scaling}. We also provide extended results, including runtimes and additional metrics, in \autoref{appendix:extended-results}. All metrics are reported with standard errors over 5 seeds except when noted otherwise. Experiments were run on a single NVIDIA GTX 4090 24GB GPU.

\subsection{2D Gaussian Processes}\label{subsec:gp-bench} 
\begin{figure*}[t]
    \centering
    \caption{Sphere-stationary GP example comparing geodesic, RBF, and no-bias variants. The geodesic bias remains well aligned under rotation, while the RBF and no-bias variants degrade. The leftmost panel is the task, the center is the prediction, and the right is the uncertainty heatmap where warmer colors indicate greater uncertainty.}
    \label{fig:rot-main}
    \begin{minipage}{0.75\textwidth}
        \centering
        \includegraphics[width=0.15\textwidth]{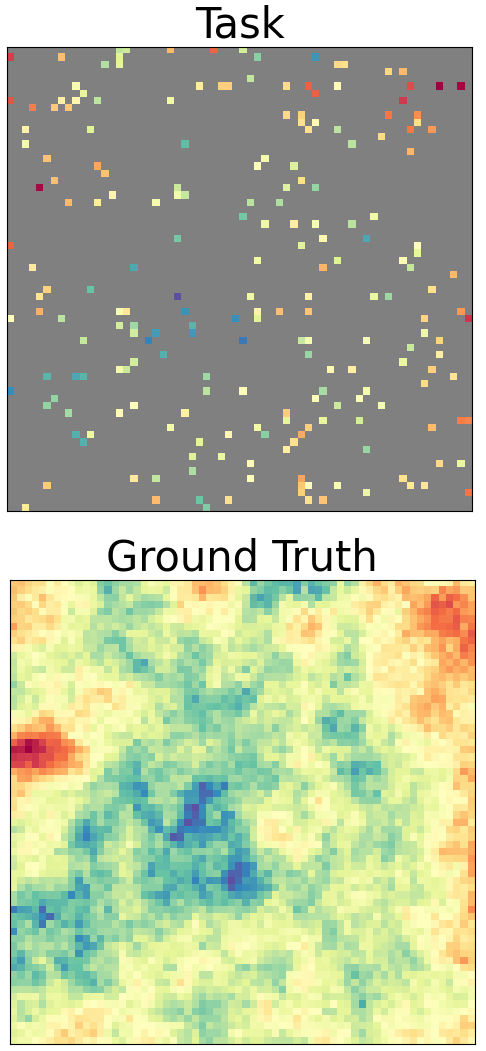}
        \includegraphics[width=0.40\textwidth]{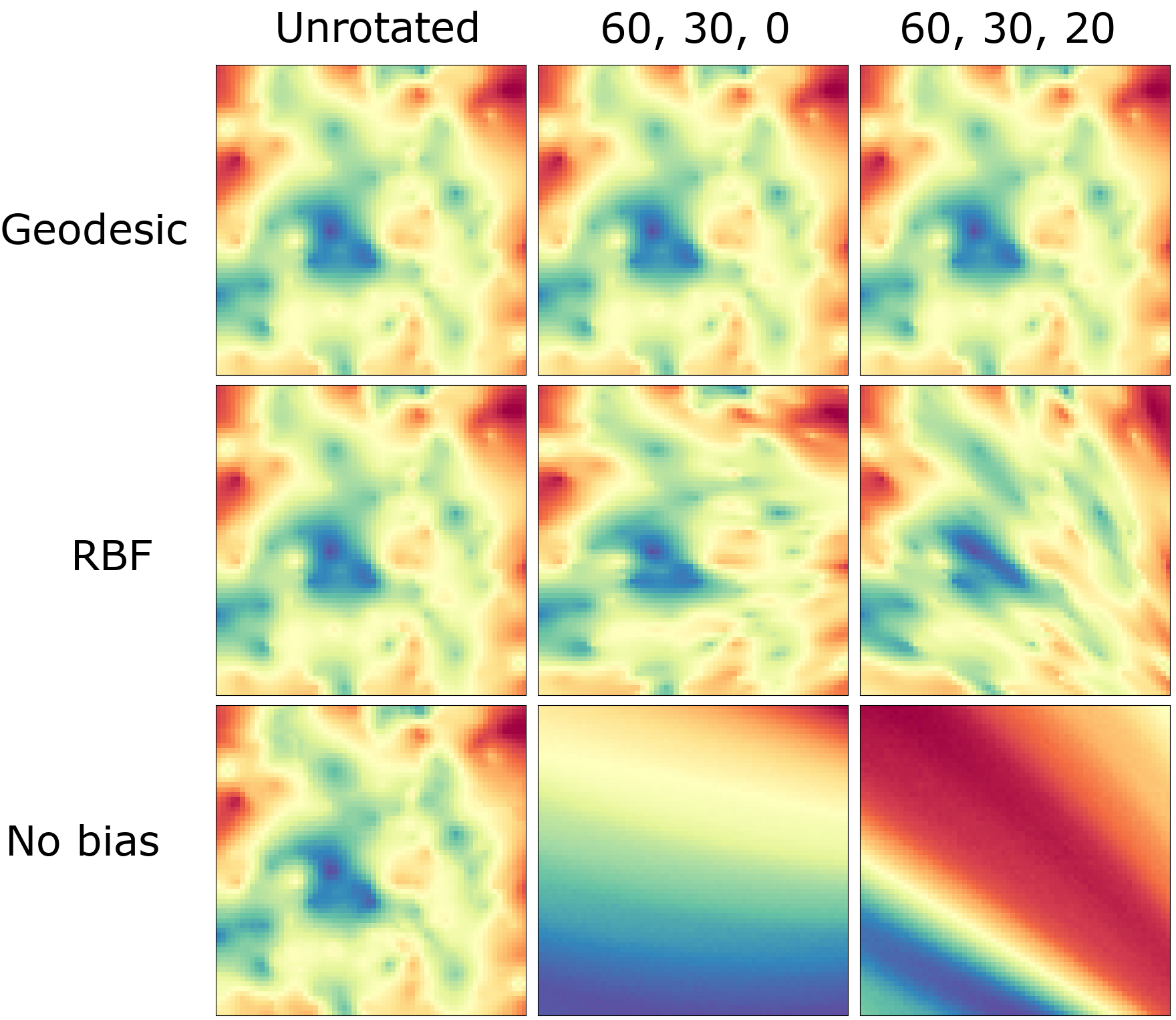}
        \includegraphics[width=0.40\textwidth]{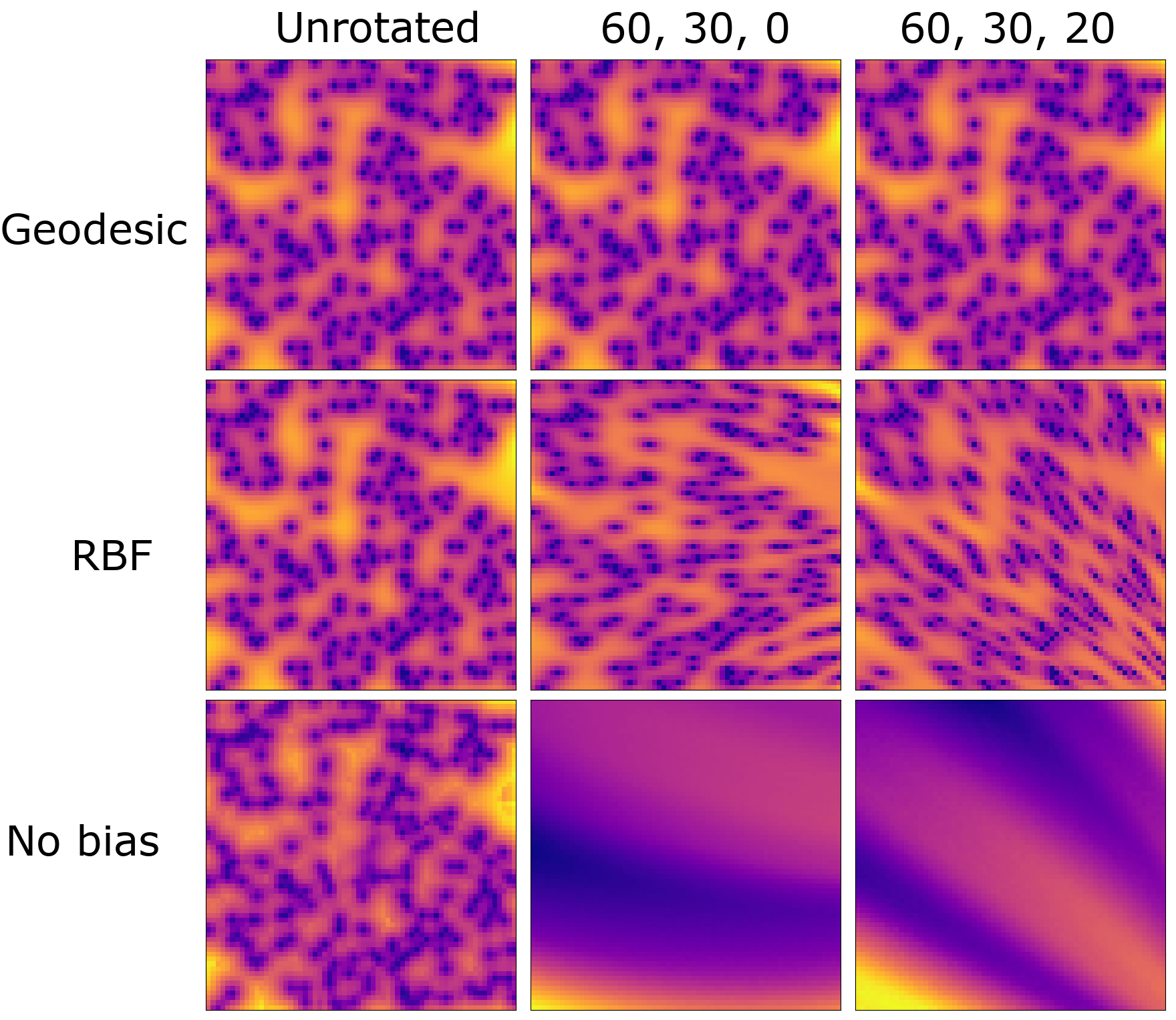}
    \end{minipage}
\end{figure*}
For this benchmark, we create tasks from 2D GP samples with an RBF kernel and lengthscale sampled from $\text{Beta}(3.0, 7.0)$, which has a mean and median of $\approx$ 0.3. This is a more challenging benchmark than is typically used and allows for greater differentiation among models since more than 50\% of lengthscales fall below 0.3 and less than 10\% lie above 0.5. (In practice, we found that even less powerful NPs could easily learn lengthscales above 0.4-0.5). We provide an extensive description of the 2D GP tasks in \autoref{appendix:exp}.

In order to evaluate the generalization of various NPs, we evaluate them under four different scenarios: (1) tasks sampled from the training domain, (2) tasks sampled from the training domain shifted by 10 units, (3) tasks where the domain has doubled along each axis, and (4) multiscale tasks where the model receives data at two different resolutions simultaneously. Scenario (4) is particularly relevant because it mimics a situation where satellite imagery is available at two different levels of granularity and must be synthesized. Multiresolution training is also useful under limited compute: by providing the model with high-resolution data for a target region and coarse-resolution data for the surrounding area, the model can focus on the target region while still incorporating important contextual information. In this benchmark, using the coarser resolution for the surrounding area results in a $\approx$78\% reduction in the total number of pixels. \autoref{appendix:multires-gp} shows an example of a multiresolution task. Because ConvCNP requires a fixed grid and TNP-D fails under shifting and scaling, we compare only PT-TE-TNP and BSA-TNP on (4).

BSA-TNP not only outperforms the other methods on tasks from the original training domain, but also suffers no performance degradation under the out-of-distribution tasks, i.e. shifting and scaling. TNP-D fails because it does not exhibit any form of translation invariance. ConvCNP oversmooths output due to the induced latent grid, and PT-TE-TNP struggles due to the bottleneck imposed by the inducing points. We provide an extended comparison to the full TE-TNP as well as a variant of PT-TE-TNP with more inducing points in \autoref{2d-gp} of \autoref{appendix:extended-results}.
\begin{table}[t]
\centering
\setlength{\tabcolsep}{3pt}
\caption{NLL on 2D GP tasks.}
\label{tab:gp-all}
\resizebox{\columnwidth}{!}{%
\begin{tabular}{@{}lccc@{}}
\toprule
Model/Domain & Original & Shifted & Scaled \\
\midrule
ConvCNP & $-0.22\pm0.00$ & $-0.22\pm0.00$ & $-0.23\pm0.01$ \\
TNP-D   & $-0.27\pm0.01$ & $20.19\pm11.17$ & $0.67\pm0.18$  \\
PT-TE-TNP & $0.40\pm0.02$ & $0.40\pm0.02$ & $1.45\pm0.08$ \\
BSA-TNP & $\mathbf{-0.32\pm0.00}$ & $\mathbf{-0.32\pm0.00}$ & $\mathbf{-0.28\pm0.01}$ \\
\bottomrule
\end{tabular}%
}
\end{table}

\begin{table}[t]
\centering
\setlength{\tabcolsep}{3pt}
\caption{Multiscale GP tasks trained on two scales simultaneously. Half of the context points come from $[-0.5,0.5]$ and the other half from $[-1.5,1.5]$, simulating a high resolution target zone with coarser observations from the surrounding area.}
\label{tab:multiscale}
\resizebox{\columnwidth}{!}{%
\begin{tabular}{@{}lcccc@{}}
\toprule
Model & NLL & MAE & RMSE & CVG@95\% \\
\midrule
PT-TE-TNP & $0.18\pm0.01$ & $0.32\pm0.00$ & $0.48\pm0.00$ & $\mathbf{0.95\pm0.00}$ \\
BSA-TNP   & $\mathbf{-0.24\pm0.00}$ & $\mathbf{0.25\pm0.00}$ & $\mathbf{0.40\pm0.00}$ & $\mathbf{0.95\pm0.00}$ \\
\bottomrule
\end{tabular}%
}
\end{table}

\subsection{Spherical Gaussian Processes}\label{subsec:gp-spherical} 
We also evaluate BSA-TNP on spherical GPs, where Euclidean translations are only a local approximation to the correct symmetry. We compare an unbiased SA-TNP, BSA-TNP with the standard RBF bias, and BSA-TNP with an SO(3)-invariant geodesic bias. Tasks are sampled on a spherical patch and then evaluated under increasingly large rotations; full details are given in \autoref{appendix:exp}. \autoref{tab:rot-main} shows that all three variants perform similarly without rotation, but only the geodesic bias remains stable under rotated test tasks. The RBF bias degrades as the rotation becomes more pronounced, while the unbiased model breaks down. \autoref{fig:rot-main} visualizes these effects.

\begin{table}
\centering
\setlength{\tabcolsep}{3pt}
\caption{Spherical Gaussian Processes.}
\label{tab:rot-main}
\resizebox{\columnwidth}{!}{
\begin{tabular}{@{}lccc@{}}
\toprule
Model & Unrotated & $60^\circ, 30^\circ, 0^\circ$ & $60^\circ, 30^\circ, 20^\circ$ \\
\midrule
SA-TNP (No bias) & $\mathbf{-0.01\pm0.00}$ & $16.41\pm13.35$ & $11.69\pm7.47$ \\
BSA-TNP (RBF) & $\mathbf{-0.01\pm0.00}$ & $0.05\pm0.00$ & $0.08\pm0.00$ \\
BSA-TNP (Geodesic) & $\mathbf{-0.01\pm0.00}$ & $\mathbf{-0.01\pm0.00}$ & $\mathbf{-0.01\pm0.00}$ \\
\bottomrule
\end{tabular}
}
\end{table}

\begin{figure*}[t]
  \centering
  \includegraphics[width=0.7\linewidth]{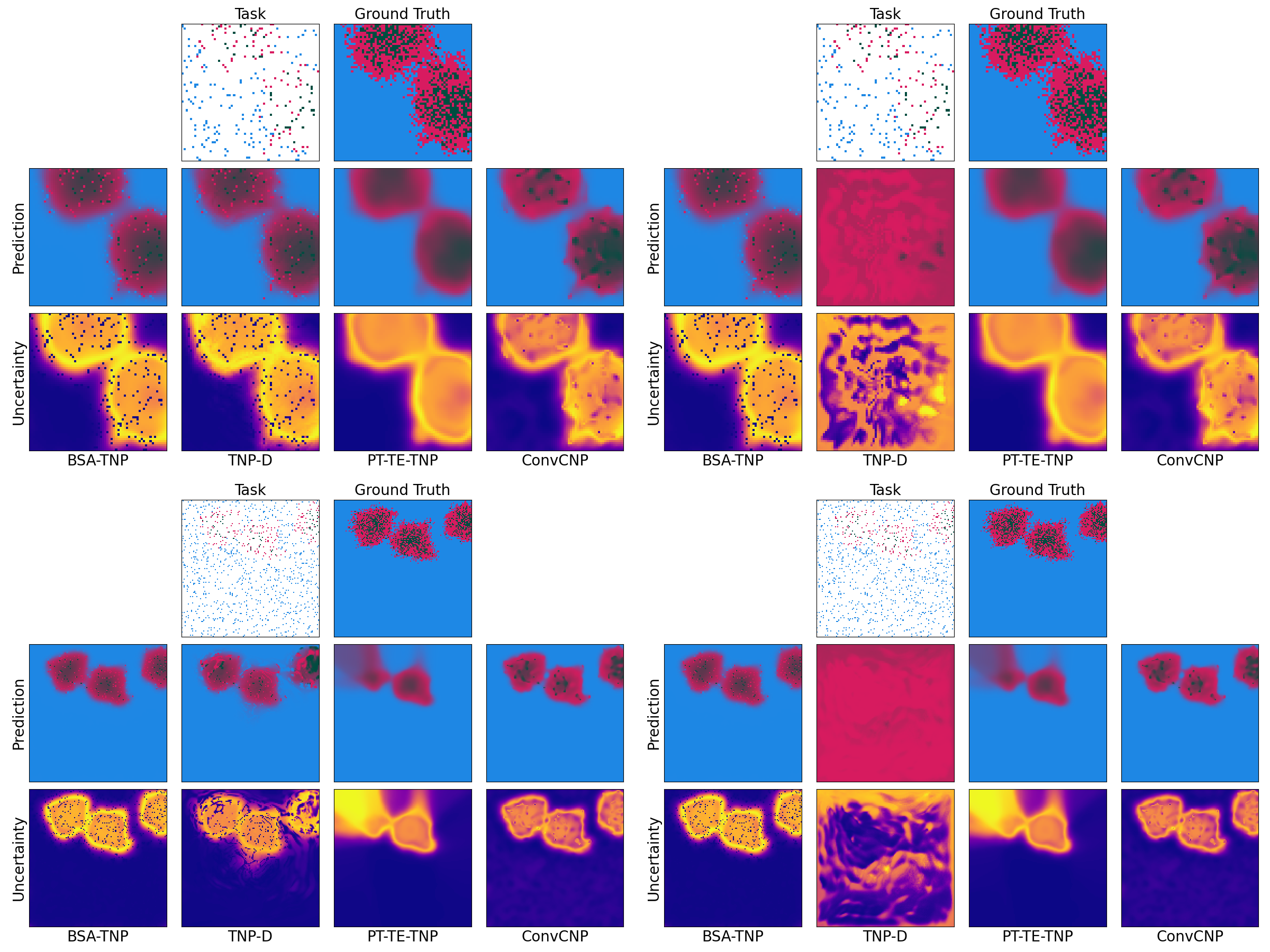}
  \caption{Susceptible-Infected-Recovered (SIR) tasks on the original (upper left), shifted (upper right), scaled (bottom left), and shifted$+$scaled domains. BSA-TNP preserves performance under both shifts and scaling, while TNP-D and PT-TE-TNP degrade out of distribution; ConvCNP extrapolates but oversmooths.}
  \label{fig:sir}
\end{figure*}

\subsection{Epidemiology} 
To evaluate performance on an epidemiological application, we generate tasks from the Susceptible-Infected-Recovered (SIR) model, which models the spread of infectious disease outbreaks. It is governed by an infection rate, $\beta$, a recovery rate, $\gamma$, and the number of initial infections, $\omega$. We sample $\beta\sim\operatorname{Beta}(2, 8)$, $\gamma\sim\operatorname{InvGamma}(5, 0.4)$, and $\omega\sim\operatorname{randint}(1,5)$. The infection rate, $\beta$, is decreased as an inverse function of distance from the infected individual using a convolutional kernel with width 9 (see \autoref{appendix:exp} for more details). In expectation, this parameter setting corresponds to an infection rate of 20\% upon exposure and a 10-day recovery period (similar to COVID-19). We train all models on 64x64 images and use the same task distribution as the 2D GP benchmark. A notable difference here is that we train and evaluate models with context points included in the test points. The reason for this is that, unlike infections, recoveries are independent of one another. If the model has been told that a particular individual recovered, it should be able to reproduce that in the output, which is not the case for models that smooth inputs, e.g.~PT-TE-TNP and ConvCNP. BSA-TNP matches the performance of TNP-D on the original domain and outperforms all other methods on the out-of-distribution tasks.

\begin{table}[t]
\centering
\setlength{\tabcolsep}{3pt}
\caption{NLL on SIR tasks.}
\label{tab:sir}
\resizebox{\columnwidth}{!}{%
\begin{tabular}{@{}lccc@{}}
\toprule
Model/Domain & Original & Shifted & Scaled \\
\midrule
ConvCNP   & $0.24\pm0.00$ & $0.24\pm0.00$ & $0.21\pm0.00$ \\
TNP-D     & $\mathbf{0.19\pm0.00}$ & $3.00\pm0.37$ & $0.24\pm0.01$ \\
PT-TE-TNP & $0.27\pm0.00$ & $0.27\pm0.00$ & $0.44\pm0.02$ \\
BSA-TNP   & $\mathbf{0.19\pm0.00}$ & $\mathbf{0.19\pm0.00}$ & $\mathbf{0.18\pm0.00}$ \\
\bottomrule
\end{tabular}%
}
\end{table}
As this benchmark relies on a standard grid, this makes it amenable to clean scaling tests. We demonstrate the scalability of BSA-TNP on $128\times128$, $256\times256$, $512\times512$, and $1024\times1024$ resolutions in \autoref{tab:sir-scale} of \autoref{appendix:extended-results}.

\subsection{Climate} 
\begin{figure}[t]  
  \centering
  \includegraphics[width=0.975\linewidth]{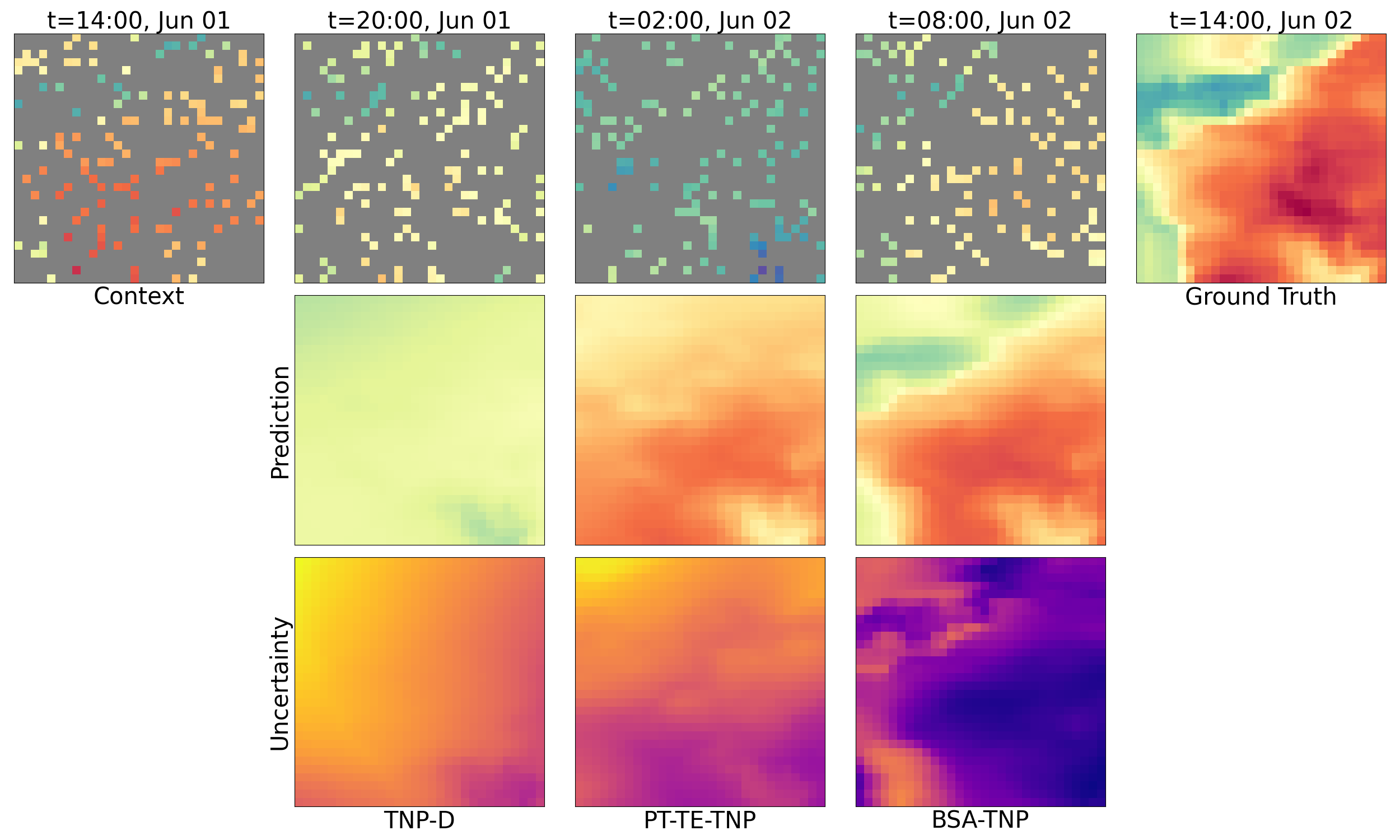}
  \caption{ERA5 forecast example on an out-of-region northern Europe task.}
  \label{fig:era5}
\end{figure}
For climate, we largely follow the setup in \cite{pttnp} with ERA5 \citep{era5} surface air temperatures. To test learning over time and generalization across space, we train models on tasks from central Europe $[42^\circ\mathrm{N},\,53^\circ\mathrm{N}]\times[8^\circ\mathrm{E},\,28^\circ\mathrm{E}]$ and test them on tasks from northern Europe $[53^\circ\mathrm{N},\,62^\circ\mathrm{N}]\times[8^\circ\mathrm{E},\,28^\circ\mathrm{E}]$ and western Europe $[42^\circ\mathrm{N},\,53^\circ\mathrm{N}]\times[4^\circ\mathrm{W},\,8^\circ\mathrm{E}]$ over random 30-hour segments of time in 6-hour increments. When testing on northern Europe, we use western Europe as a validation set and select the best model for testing and vice versa when testing on western Europe. Unlike \cite{pttnp}, which inpaints several frames over time, we change the benchmark to forecast the weather in the next 6 hours. We also add ``hour of day'' to the feature set and increase the resolution from $0.5^\circ$ to $0.25^\circ$. 
More details on the experimental setup can be found in \autoref{appendix:exp}. \autoref{fig:era5} provides a visualization of an example task, and the results in \autoref{tab:era5-cnw} and \autoref{tab:era5-cwn} show that BSA-TNP outperforms its competitors. ConvCNP was excluded from this benchmark because it does not natively handle space and time or extra covariates.
\begin{table}[t]
\centering
\setlength{\tabcolsep}{3pt} 
\caption{ERA5 CNW - Training on central, validating on northern, and testing on western Europe.}
\label{tab:era5-cnw}
\resizebox{\columnwidth}{!}{%
\begin{tabular}{@{}lcccc@{}}
\toprule
Model & NLL & MAE & RMSE & CVG@95\% \\
\midrule
TNP-D     & $0.38\pm0.06$ & $0.23\pm0.01$ & $0.29\pm0.01$ & $0.84\pm0.01$ \\
PT-TE-TNP & $0.17\pm0.02$ & $0.22\pm0.00$ & $0.28\pm0.01$ & $0.90\pm0.01$ \\
BSA-TNP   & $\mathbf{-0.07\pm0.04}$ & $\mathbf{0.18\pm0.01}$ & $\mathbf{0.23\pm0.01}$ & $\mathbf{0.91\pm0.01}$ \\
\bottomrule
\end{tabular}%
}
\end{table}
\begin{table}[t]
\centering
\setlength{\tabcolsep}{3pt} 
\caption{ERA5 CWN - Training on central, validating on western, and testing on northern Europe.}
\label{tab:era5-cwn}
\resizebox{\columnwidth}{!}{%
\begin{tabular}{@{}lcccc@{}}
\toprule
Model & NLL & MAE & RMSE & CVG@95\% \\
\midrule
TNP-D     & $0.67\pm0.06$ & $0.25\pm0.01$ & $0.32\pm0.01$ & $0.78\pm0.01$ \\
PT-TE-TNP & $0.13\pm0.02$ & $0.21\pm0.00$ & $0.27\pm0.00$ & $0.90\pm0.01$ \\
BSA-TNP   & $\mathbf{-0.13\pm0.04}$ & $\mathbf{0.17\pm0.01}$ & $\mathbf{0.22\pm0.01}$ & $\mathbf{0.91\pm0.01}$ \\
\bottomrule
\end{tabular}%
}
\end{table}
We also compared BSA-TNP to a non-amortized sparse variational GP (SVGP) baseline that fits a fresh weather-kernel GP to each ERA5 task using elevation, hour of day, latitude, longitude, and time as inputs. After tuning the SVGP on held-out tasks, BSA-TNP still delivered substantially better accuracy (NLL $-0.01$ vs.~$1.31$, RMSE $0.23$ vs.~$0.34$) and calibration ($0.90$ vs.~$0.71$ CVG@95\%), while avoiding roughly 55 seconds of per-task optimization; full results are given in \autoref{tab:era5-bsa-vs-svgp-results} of \autoref{appendix:extended-results}.

\subsection{Partially Stationary Processes}\label{subsec:beijing-bench} 
To test whether our attention bias remains useful when strict \textit{G}-invariance is only approximate, we evaluate on two complementary partially stationary benchmarks: one real-world and one synthetic. The first is the Beijing Multi-Site Air Quality dataset from the UCI repository \citep{beijing}. After merging station coordinates and dropping rows with missing values, we use calendar and weather covariates, latitude/longitude, and elapsed time to forecast PM2.5 and PM10 across the 12 sites; wind direction is one-hot encoded. Following a chronological 80/10/10 train-validation-test split, each task uses an intended 48-hour history for each site, padded by 4 extra hours per site to absorb gaps, to forecast an intended 6-hour horizon padded by 1 extra hour per site. The second is a synthetic Gneiting GP benchmark in which each task is sampled on an irregular 2D spatial layout over five consecutive time points, with random subsets of locations observed at the first four times and the fifth used as the forecast target. Its covariance follows the nonseparable Gneiting kernel, while its mean combines a fixed terrain field, periodic temporal components, a linear trend, and a moving spatial hotspot, creating smooth local structure without full translation invariance. \autoref{tab:beijing-air-quality} and \autoref{gneiting-gp} show that the bias remains helpful in both settings: bias-only performs best on the real Beijing task, while combining bias and embeddings works best on the synthetic benchmark. We hypothesize that bias-only performs best in Beijing through implicit regularization, i.e. embedding locations can more easily overfit the training distribution.
\begin{table}
\caption{Beijing Air Quality}
\label{tab:beijing-air-quality}
\resizebox{\columnwidth}{!}{%
\begin{tabular}{lccc}
\toprule
Name & NLL & RMSE & CVG@95\% \\
\midrule
TNP-D & $-1.33\pm0.15$ & $0.06\pm0.00$ & $0.88\pm0.01$ \\
PT-TE-TNP & $-1.60\pm0.04$ & $0.06\pm0.00$ & $\mathbf{0.92\pm0.00}$ \\
BSA-TNP (embed) & $-1.75\pm0.04$ & $0.06\pm0.00$ & $0.90\pm0.00$ \\
BSA-TNP (bias) & $\mathbf{-1.82\pm0.03}$ & $\mathbf{0.05\pm0.00}$ & $0.91\pm0.00$ \\
BSA-TNP (embed+bias) & $-1.75\pm0.04$ & $\mathbf{0.05\pm0.00}$ & $0.90\pm0.01$ \\
\bottomrule
\end{tabular}
}
\end{table}

\begin{table}
\caption{Gneiting GP}
\label{gneiting-gp}
\resizebox{\columnwidth}{!}{%
\begin{tabular}{lccc}
\toprule
Name & NLL & RMSE & CVG@95\% \\
\midrule
TNP-D & $1.45\pm0.01$ & $1.03\pm0.01$ & $0.94\pm0.00$ \\
PT-TE-TNP & $1.46\pm0.00$ & $1.03\pm0.00$ & $\mathbf{0.95\pm0.00}$ \\
BSA-TNP (embed) & $1.46\pm0.00$ & $1.04\pm0.00$ & $0.94\pm0.00$ \\
BSA-TNP (bias) & $1.39\pm0.00$ & $0.99\pm0.00$ & $0.94\pm0.00$ \\
BSA-TNP (embed+bias) & $\mathbf{1.38\pm0.00}$ & $\mathbf{0.98\pm0.00}$ & $\mathbf{0.95\pm0.00}$ \\
\bottomrule
\end{tabular}
}
\end{table}

\section{LIMITATIONS AND FUTURE WORK}
While most NP tasks assume sparse observations and dense predictions, some tasks have dense observations and sparse predictions, e.g.~video prediction tasks. In this case, BSA-TNP still has quadratic complexity in the number of context points, making training and inference over a collection of dense frames computationally burdensome. In future work, we hope to extend BSA-TNP to incorporate frame summaries from models like I-JEPA \citep{ijepa} to overcome this limitation.

\section{CONCLUSION}\label{sec:conclusion}
In this work, we introduce BSA-TNP, which represents a significant step forward in NP design by combining simplicity, scalability, and extensibility with KRBlocks and \textit{G}-invariant attention biases. Its ability to capture complex spatiotemporal dependencies with minimal overhead not only delivers state-of-the-art accuracy, but also enables inference at scale. Because of its broad applicability, we hope it will form a foundational tool for scientists who work with spatiotemporal phenomena and accelerate research in critical domains like climate, epidemiology, and robotics.

\subsubsection*{Acknowledgments}
E.S. and J.N. acknowledge support in part by the AI2050 program at Schmidt Sciences (Grant [G-22-64476]). D.J. acknowledges his Google DeepMind scholarship. 
\bibliographystyle{plainnat}
\bibliography{main}
\clearpage
\appendix
\crefalias{section}{appendix}
\thispagestyle{empty}

\onecolumn
\newpage
\section*{Checklist}
\begin{enumerate}
  \item For all models and algorithms presented, check if you include:
  \begin{enumerate}
    \item A clear description of the mathematical setting, assumptions, algorithm, and/or model. \textbf{Yes}
    \item An analysis of the properties and complexity (time, space, sample size) of any algorithm. \textbf{Yes}
    \item (Optional) Anonymized source code, with specification of all dependencies, including external libraries. \textbf{Yes}
  \end{enumerate}
  \item For any theoretical claim, check if you include:
  \begin{enumerate}
    \item Statements of the full set of assumptions of all theoretical results. \textbf{Yes}
    \item Complete proofs of all theoretical results. \textbf{Yes}
    \item Clear explanations of any assumptions. \textbf{Yes}
  \end{enumerate}

  \item For all figures and tables that present empirical results, check if you include:
  \begin{enumerate}
    \item The code, data, and instructions needed to reproduce the main experimental results (either in the supplemental material or as a URL). \textbf{Yes}
    \item All the training details (e.g., data splits, hyperparameters, how they were chosen). \textbf{Yes}
    \item A clear definition of the specific measure or statistics and error bars (e.g., with respect to the random seed after running experiments multiple times). \textbf{Yes}
    \item A description of the computing infrastructure used. (e.g., type of GPUs, internal cluster, or cloud provider). \textbf{Yes}
  \end{enumerate}

  \item If you are using existing assets (e.g., code, data, models) or curating/releasing new assets, check if you include:
  \begin{enumerate}
    \item Citations of the creator If your work uses existing assets. \textbf{Yes}
    \item The license information of the assets, if applicable. \textbf{Yes}
    \item New assets either in the supplemental material or as a URL, if applicable. \textbf{Yes}
    \item Information about consent from data providers/curators. \textbf{Not Applicable}
    \item Discussion of sensible content if applicable, e.g., personally identifiable information or offensive content. \textbf{Not Applicable}
  \end{enumerate}

  \item If you used crowdsourcing or conducted research with human subjects, check if you include:
  \begin{enumerate}
    \item The full text of instructions given to participants and screenshots. \textbf{Not Applicable}
    \item Descriptions of potential participant risks, with links to Institutional Review Board (IRB) approvals if applicable. \textbf{Not Applicable}
    \item The estimated hourly wage paid to participants and the total amount spent on participant compensation. \textbf{Not Applicable}
  \end{enumerate}
\end{enumerate}

\clearpage
\appendix
\section{Model Parameterizations}\label{appendix:model-param}
\paragraph{ConvCNP:} We use the ``off-grid'' version of ConvCNP since we train and test on off-grid observations for many benchmarks. We use an induced density of 16 points per unit, which corresponds to 64 units per axis for most benchmarks that are trained on $[-2, 2]^2$. This implies that in a 64x64 image, each pixel has its own grid point. We use ConvDeepSets for encoding and decoding to the latent grid and a stack of 8 ConvCNPNet blocks with kernel size (9, 9) and feature dimension of 128. The prediction head consists of a 4-layer MLP with 3 layers with 128 hidden units and a final layer projecting to the output dimension. This parameterization has $\approx 507K$ parameters, which varies slightly depending on the dimension of inputs and outputs.
\paragraph{TNP-D:} We use the standard formulation defined in \cite{tnp}, which consists of a stack of transformer encoder layers with a special mask to prevent context and test points from attending to test points. We make a slight departure for the Beijing Multi-City Air Quality and Gneiting GP benchmarks where we modify the transformer blocks to use pre-normalization. We found that TNP-D struggled to learn without this, even though it was not part of the original design. TNP-D has a 3-layer embedding MLP that consists of 256, 128, and 64 units, respectively. It uses 6 transformer encoder blocks each with 4 attention heads. The token embedding dimension is 64 throughout except when performing attention, when we upscale the queries, keys, and values to 128. Feedforward layers consist of two hidden layers with 256 and 64 units, respectively. The prediction head is identical to the feedforward layers except it has an additional layer projecting to the output dimension. This parameterization results in $\approx$477K parameters, which varies slightly depending on the dimension of inputs and outputs. We parameterize BSA-TNP in an almost identical fashion to maintain as much parity as possible.
\paragraph{PT-TE-TNP:} We follow the default \href{https://github.com/cambridge-mlg/tetnp/blob/main/experiments/configs/models/teist.yml}{parameterization from the GitHub repository} associated with \cite{pttnp}. We use the IST-based architecture, as the authors noted this performed better in their tests. The model uses a token dimension of 128, 8 attention heads, 5 stacked PT-TE-IST blocks, and 32 pseudo-tokens. This parameterization results in $\approx$ 1.85M parameters, which varies slightly depending on the dimension of inputs and outputs.
\paragraph{BSA-TNP:} We use 6 KRBlocks, 4 attention heads, and a token embedding of 64. We upscale this to 128 when performing attention, similar to TNP-D. Also like TNP-D, we use a 3-layer embedding MLP that consists of 256, 128, and 64 units, respectively. Feedforward layers consist of two hidden layers with 256 and 64 units, respectively. The prediction head is identical to the feedforward layers except it has an additional layer projecting to the output dimension. For benchmarks that test translation invariance, we do not embed space or time features, and pass these only to the bias functions. Furthermore, for spatial and temporal bias, we use 5 and 3 basis functions, respectively, per attention head per layer. This parameterization results in $\approx$ 478K parameters, which varies slightly depending on the dimension of inputs and outputs.

\newpage
\section{Extended Results}\label{appendix:extended-results}
Here we provide full results, including runtimes and extra metrics that may not have fit in the main paper.
\begin{table}[H]
\centering
\caption{2D GP extended results. PT-TE-TNP (M=128) and TE-TNP took approximately 4 hours and 8 hours per run, respectively. This is because the full attention matrices are passed through an MLP in each layer. If there are $L$ locations and the MLP is only 2 layers with a small hidden dimension, e.g. $H=128$, this makes the translation-equivariant attention calculation $O(L^2\cdot 2H)=O(256L^2)$, i.e. it is 256x slower than regular attention per attention calculation (for each head in each layer). The gains, if any, from using these more computationally intensive models were minimal, and so we use the default implementation of PT-TE-TNP with 32 latents in all other benchmarks.}
\label{2d-gp}
\begin{tabular}{lccccc}
\toprule
Name & NLL & MAE & RMSE & CVG@95\% & Runtime (min) \\
\midrule
ConvCNP & $-0.22\pm0.00$ & $0.23\pm0.00$ & $0.32\pm0.00$ & $\mathbf{0.95\pm0.00}$ & $125.21\pm1.88$ \\
TNP-D & $-0.27\pm0.01$ & $0.22\pm0.00$ & $0.31\pm0.00$ & $\mathbf{0.95\pm0.00}$ & $65.92\pm0.01$ \\
TE-TNP & $0.27\pm0.01$ & $0.29\pm0.00$ & $0.40\pm0.00$ & $0.92\pm0.00$ & $488.56\pm0.11$ \\
PT-TE-TNP (M=32) & $0.40\pm0.02$ & $0.33\pm0.00$ & $0.44\pm0.01$ & $\mathbf{0.95\pm0.00}$ & $74.30\pm0.12$ \\
PT-TE-TNP (M=128) & $0.16\pm0.06$ & $0.28\pm0.01$ & $0.39\pm0.01$ & $\mathbf{0.95\pm0.00}$ & $227.92\pm0.17$ \\
BSA-TNP & $\mathbf{-0.32\pm0.00}$ & $\mathbf{0.21\pm0.00}$ & $\mathbf{0.30\pm0.00}$ & $\mathbf{0.95\pm0.00}$ & $\mathbf{40.79\pm0.12}$ \\
\bottomrule
\end{tabular}
\end{table}

\begin{table}[H]
\centering
\caption{2D GP evaluated on tasks shifted by 10 units relative to the training domain.}
\label{2d-gp-shifted-10}
\begin{tabular}{lccccc}
\toprule
Name & NLL & MAE & RMSE & CVG@95\% & Runtime (min) \\
\midrule
ConvCNP & $-0.22\pm0.00$ & $0.23\pm0.00$ & $0.32\pm0.00$ & $\mathbf{0.95\pm0.00}$ & $3.00\pm0.01$ \\
TNP-D & $20.19\pm11.17$ & $0.77\pm0.01$ & $0.96\pm0.02$ & $0.47\pm0.15$ & $1.30\pm0.00$ \\
PT-TE-TNP & $0.40\pm0.02$ & $0.33\pm0.00$ & $0.44\pm0.01$ & $\mathbf{0.95\pm0.00}$ & $1.13\pm0.00$ \\
BSA-TNP & $\mathbf{-0.32\pm0.00}$ & $\mathbf{0.21\pm0.00}$ & $\mathbf{0.30\pm0.00}$ & $\mathbf{0.95\pm0.00}$ & $\mathbf{0.90\pm0.00}$ \\
\bottomrule
\end{tabular}
\end{table}

\begin{table}[H]
\centering
\caption{2D GP evaluated on tasks whose spatial domain is doubled along each axis relative to training.}
\label{2d-gp-scaled-2x}
\begin{tabular}{lccccc}
\toprule
Name & NLL & MAE & RMSE & CVG@95\% & Runtime (min) \\
\midrule
ConvCNP & $-0.23\pm0.01$ & $0.23\pm0.00$ & $0.32\pm0.00$ & $0.93\pm0.00$ & $4.46\pm0.01$ \\
TNP-D & $0.67\pm0.18$ & $0.34\pm0.02$ & $0.47\pm0.03$ & $0.90\pm0.02$ & $3.69\pm0.00$ \\
PT-TE-TNP & $1.45\pm0.08$ & $0.72\pm0.01$ & $0.96\pm0.01$ & $0.87\pm0.02$ & $\mathbf{2.03\pm0.01}$ \\
BSA-TNP & $\mathbf{-0.28\pm0.01}$ & $\mathbf{0.21\pm0.00}$ & $\mathbf{0.30\pm0.00}$ & $\mathbf{0.94\pm0.01}$ & $2.90\pm0.03$ \\
\bottomrule
\end{tabular}
\end{table}

\begin{table}[H]
\centering
\caption{Multiscale 2D GP in which each task combines fine-resolution targets with coarser surrounding context. As in the main text, only PT-TE-TNP and BSA-TNP are compared because ConvCNP requires a fixed grid and TNP-D does not handle this out-of-distribution scaling regime well.}
\label{multiscale-gp}
\begin{tabular}{lccccc}
\toprule
Name & NLL & MAE & RMSE & CVG@95\% & Runtime (min) \\
\midrule
PT-TE-TNP & $0.18\pm0.01$ & $0.32\pm0.00$ & $0.48\pm0.00$ & $\mathbf{0.95\pm0.00}$ & $57.13\pm0.11$ \\
BSA-TNP & $\mathbf{-0.24\pm0.00}$ & $\mathbf{0.25\pm0.00}$ & $\mathbf{0.40\pm0.00}$ & $\mathbf{0.95\pm0.00}$ & $\mathbf{17.79\pm0.04}$ \\
\bottomrule
\end{tabular}
\end{table}

\begin{table}[H]
\centering
\caption{Spherical GP without rotation, included as the in-distribution reference for the rotational-invariance benchmark.}
\label{gp-rotated-0-0-0}
\begin{tabular}{lccccc}
\toprule
Name & NLL & MAE & RMSE & CVG@95\% & Runtime (min) \\
\midrule
SA-TNP (No bias) & $\mathbf{-0.01\pm0.00}$ & $\mathbf{0.20\pm0.00}$ & $\mathbf{0.26\pm0.00}$ & $\mathbf{0.95\pm0.00}$ & $\mathbf{33.90\pm0.01}$ \\
BSA-TNP (RBF) & $\mathbf{-0.01\pm0.00}$ & $\mathbf{0.20\pm0.00}$ & $\mathbf{0.26\pm0.00}$ & $\mathbf{0.95\pm0.00}$ & $41.04\pm0.13$ \\
BSA-TNP (Geodesic) & $\mathbf{-0.01\pm0.00}$ & $\mathbf{0.20\pm0.00}$ & $\mathbf{0.26\pm0.00}$ & $\mathbf{0.95\pm0.00}$ & $40.95\pm0.01$ \\
\bottomrule
\end{tabular}
\end{table}

\begin{table}[H]
\centering
\caption{Spherical GP rotated by $60^\circ$ north and $30^\circ$ east. The geodesic-bias variant remains aligned under this rotation while the Euclidean-bias and no-bias variants degrade.}
\label{gp-rotated-60-30-0}
\begin{tabular}{lccccc}
\toprule
Name & NLL & MAE & RMSE & CVG@95\% & Runtime (min) \\
\midrule
SA-TNP (No bias) & $16.41\pm13.35$ & $0.73\pm0.01$ & $0.91\pm0.01$ & $0.62\pm0.16$ & $\mathbf{0.79\pm0.00}$ \\
BSA-TNP (RBF) & $0.05\pm0.00$ & $0.21\pm0.00$ & $0.27\pm0.00$ & $0.92\pm0.00$ & $0.91\pm0.00$ \\
BSA-TNP (Geodesic) & $\mathbf{-0.01\pm0.00}$ & $\mathbf{0.20\pm0.00}$ & $\mathbf{0.26\pm0.00}$ & $\mathbf{0.95\pm0.00}$ & $0.91\pm0.00$ \\
\bottomrule
\end{tabular}
\end{table}

\begin{table}[H]
\centering
\caption{Spherical GP rotated by $60^\circ$ north, $30^\circ$ east, and an additional $20^\circ$ axial tilt. This is the hardest SO(3) generalization setting reported in the paper.}
\label{gp-rotated-60-30-20}
\begin{tabular}{lccccc}
\toprule
Name & NLL & MAE & RMSE & CVG@95\% & Runtime (min) \\
\midrule
SA-TNP (No bias) & $11.69\pm7.47$ & $0.73\pm0.01$ & $0.91\pm0.01$ & $0.58\pm0.14$ & $\mathbf{0.79\pm0.00}$ \\
BSA-TNP (RBF) & $0.08\pm0.00$ & $0.22\pm0.00$ & $0.28\pm0.00$ & $0.91\pm0.00$ & $0.91\pm0.01$ \\
BSA-TNP (Geodesic) & $\mathbf{-0.01\pm0.00}$ & $\mathbf{0.20\pm0.00}$ & $\mathbf{0.26\pm0.00}$ & $\mathbf{0.95\pm0.00}$ & $0.91\pm0.00$ \\
\bottomrule
\end{tabular}
\end{table}

\begin{table}[H]
\centering
\caption{SIR tasks on the original training domain. The reported metric is test NLL together with wall-clock training time.}
\label{sir}
\begin{tabular}{lcc}
\toprule
Name & NLL & Runtime (min) \\
\midrule
ConvCNP & $0.24\pm0.00$ & $130.07\pm1.66$ \\
TNP-D & $\mathbf{0.19\pm0.00}$ & $57.93\pm0.15$ \\
PT-TE-TNP & $0.27\pm0.00$ & $71.45\pm0.28$ \\
BSA-TNP & $\mathbf{0.19\pm0.00}$ & $\mathbf{30.88\pm0.15}$ \\
\bottomrule
\end{tabular}
\end{table}

\begin{table}[H]
\centering
\caption{SIR tasks evaluated on domains shifted by 10 units relative to training.}
\label{sir-shifted-10}
\begin{tabular}{lcc}
\toprule
Name & NLL & Runtime (min) \\
\midrule
ConvCNP & $0.24\pm0.00$ & $2.67\pm0.01$ \\
TNP-D & $3.00\pm0.37$ & $0.95\pm0.00$ \\
PT-TE-TNP & $0.27\pm0.00$ & $0.81\pm0.01$ \\
BSA-TNP & $\mathbf{0.19\pm0.00}$ & $\mathbf{0.52\pm0.00}$ \\
\bottomrule
\end{tabular}
\end{table}

\begin{table}[H]
\centering
\caption{SIR tasks evaluated on domains scaled by a factor of 2 along each spatial axis.}
\label{sir-scaled-2x}
\begin{tabular}{lcc}
\toprule
Name & NLL & Runtime (min) \\
\midrule
ConvCNP & $0.21\pm0.00$ & $3.02\pm0.00$ \\
TNP-D & $0.24\pm0.01$ & $2.22\pm0.00$ \\
PT-TE-TNP & $0.44\pm0.02$ & $\mathbf{0.57\pm0.01}$ \\
BSA-TNP & $\mathbf{0.18\pm0.00}$ & $1.46\pm0.05$ \\
\bottomrule
\end{tabular}
\end{table}

\begin{table}[H]
\centering
\caption{Scalability on SIR as spatial resolution increases from $128\times128$ to $1024\times1024$. Runtime is reported per sample; OOM denotes out-of-memory. ConvCNP quickly runs out of memory because it must interpolate every point to every grid point. TNP-D runs out of memory because it uses $O(n^2)$ attention with post facto masking. PT-TE-TNP runs out of memory because every entry in the attention matrix must be passed through an MLP for every head for every layer.}
\label{tab:sir-scale}
\begin{tabular}{llcc}
\toprule
Resolution & Name & NLL & Runtime (s) / sample \\
\midrule
128x128 & ConvCNP & $0.05$ & $0.03$ \\
128x128 & TNP-D & $0.06$ & $0.02$ \\
128x128 & PT-TE-TNP & $0.11$ & $0.00$ \\
128x128 & BSA-TNP & $\mathbf{0.04}$ & $\mathbf{0.01}$ \\
\hline
256x256 & ConvCNP & OOM & OOM \\
256x256 & TNP-D & OOM & OOM \\
256x256 & PT-TE-TNP & $0.65$ & $\mathbf{0.04}$ \\
256x256 & BSA-TNP & $\mathbf{0.10}$ & $0.25$ \\
\hline
512x512 & ConvCNP & OOM & OOM \\
512x512 & TNP-D & OOM & OOM \\
512x512 & PT-TE-TNP & $0.66$ & $\mathbf{0.16}$ \\
512x512 & BSA-TNP & $\mathbf{0.06}$ & $3.75$ \\
\hline
1024x1024 & ConvCNP & OOM & OOM \\
1024x1024 & TNP-D & OOM & OOM \\
1024x1024 & PT-TE-TNP & OOM & OOM \\
1024x1024 & BSA-TNP & $\mathbf{0.05}$ & $\mathbf{45.31}$ \\
\bottomrule
\end{tabular}
\end{table}

\begin{table}[H]
\centering
\caption{ERA5 CNW: training on central Europe, validating on northern Europe, and testing on western Europe. Each task forecasts the next 6 hours from four sparse context frames separated by 6 hours.}
\label{era5-cnw}
\begin{tabular}{lccccc}
\toprule
Name & NLL & MAE & RMSE & CVG@95\% & Runtime (min) \\
\midrule
TNP-D & $0.38\pm0.06$ & $0.23\pm0.01$ & $0.29\pm0.01$ & $0.84\pm0.01$ & $\mathbf{82.01\pm0.02}$ \\
PT-TE-TNP & $0.17\pm0.02$ & $0.22\pm0.00$ & $0.28\pm0.01$ & $0.90\pm0.01$ & $85.90\pm0.02$ \\
BSA-TNP & $\mathbf{-0.07\pm0.04}$ & $\mathbf{0.18\pm0.01}$ & $\mathbf{0.23\pm0.01}$ & $\mathbf{0.91\pm0.01}$ & $82.72\pm0.05$ \\
\bottomrule
\end{tabular}
\end{table}

\begin{table}[H]
\centering
\caption{ERA5 CWN: training on central Europe, validating on western Europe, and testing on northern Europe. Each task forecasts the next 6 hours from four sparse context frames separated by 6 hours.}
\label{era5-cwn}
\begin{tabular}{lccccc}
\toprule
Name & NLL & MAE & RMSE & CVG@95\% & Runtime (min) \\
\midrule
TNP-D & $0.67\pm0.06$ & $0.25\pm0.01$ & $0.32\pm0.01$ & $0.78\pm0.01$ & $\mathbf{82.06\pm0.02}$ \\
PT-TE-TNP & $0.13\pm0.02$ & $0.21\pm0.00$ & $0.27\pm0.00$ & $0.90\pm0.01$ & $85.90\pm0.01$ \\
BSA-TNP & $\mathbf{-0.13\pm0.04}$ & $\mathbf{0.17\pm0.01}$ & $\mathbf{0.22\pm0.01}$ & $\mathbf{0.91\pm0.01}$ & $82.64\pm0.04$ \\
\bottomrule
\end{tabular}
\end{table}

\begin{table}[H]
\centering
\caption{Effect of varying the number of spatial and temporal bias basis functions in BSA-TNP on 2D GP, SIR, and ERA5 CNW.}
\label{num-basis-ablation}
\begin{tabular}{lccc}
\toprule
Num. Basis & 2D GP NLL & SIR NLL & ERA5 CNW NLL \\
\midrule
1 & $-0.28\pm0.00$ & $\mathbf{0.19\pm0.00}$ & $-0.05\pm0.02$ \\
3 & $\mathbf{-0.32\pm0.01}$ & $\mathbf{0.19\pm0.00}$ & $-0.01\pm0.04$ \\
5 & $\mathbf{-0.32\pm0.00}$ & $\mathbf{0.19\pm0.00}$ & $\mathbf{-0.07\pm0.05}$ \\
10 & $\mathbf{-0.32\pm0.00}$ & $\mathbf{0.19\pm0.00}$ & $-0.04\pm0.03$ \\
\bottomrule
\end{tabular}
\end{table}

\begin{table}[H]
\centering
\caption{ERA5 SVGP. Mean and standard deviation reported across 5 BSA-TNP seeds, each evaluated on 32 single-task test batches, yielding 160 held-out western Europe tasks per method in total. For this comparison, we reuse the ERA5 benchmark from the main text: BSA-TNP is trained on central Europe, validated on northern Europe, and evaluated on western Europe over random 30-hour windows sampled in 6-hour increments. Each task contains 4 context frames and 1 forecast frame on a $7.5^\circ\times7.5^\circ$ region at $0.25^\circ$ resolution, with the number of context locations per time step sampled uniformly from integers in $[45,225)$ and all 900 pixels in the target frame used for evaluation. We benchmark 5 BSA-TNP checkpoints, one per seed, against a non-amortized sparse variational GP (SVGP) baseline that is fit from scratch on every test task using elevation, hour of day, latitude, longitude, and time as inputs. The SVGP baseline is tuned separately on 4 held-out validation tasks over the number of inducing points, optimization steps, learning rate, and initial lengthscale; the best configuration uses 64 inducing points, 500 SVI steps, and learning rate 1e-3. The reported SVGP fit time is the per-task optimization time, and the predict time is the additional time required to evaluate the fitted posterior on the 900 target pixels.}
\label{tab:era5-bsa-vs-svgp-results}
\begin{tabular}{lcccccc}
\toprule
Method & NLL & MAE & RMSE & CVG@95\% & Fit Time (s) & Predict Time (s) \\
\midrule
BSA-TNP & $\mathbf{-0.01 \pm 0.24}$ & $\mathbf{0.19 \pm 0.04}$ & $\mathbf{0.23 \pm 0.04}$ & $\mathbf{0.90 \pm 0.04}$ & -- & -- \\
SVGP & $1.31 \pm 0.38$ & $0.28 \pm 0.01$ & $0.34 \pm 0.01$ & $0.71 \pm 0.05$ & $55.45 \pm 0.68$ & $0.02 \pm 0.00$ \\
\bottomrule
\end{tabular}
\end{table}

\begin{table}[H]
\caption{Beijing Air Quality bias/embedding ablation. BSA-TNP variants compare learned location embeddings only, invariant bias only, or both on the same 48-hour-to-6-hour forecasting task.}
\label{tab:beijing-air-quality-extended}
\begin{tabular}{lccccc}
\toprule
Name & NLL & MAE & RMSE & CVG@95\% & Runtime (min) \\
\midrule
TNP-D & $-1.33\pm0.15$ & $0.04\pm0.00$ & $0.06\pm0.00$ & $0.88\pm0.01$ & $9.53\pm0.02$ \\
PT-TE-TNP & $-1.60\pm0.04$ & $0.04\pm0.00$ & $0.06\pm0.00$ & $\mathbf{0.92\pm0.00}$ & $23.39\pm0.21$ \\
BSA-TNP (embed) & $-1.75\pm0.04$ & $\mathbf{0.03\pm0.00}$ & $0.06\pm0.00$ & $0.90\pm0.00$ & $\mathbf{8.37\pm0.01}$ \\
BSA-TNP (bias) & $\mathbf{-1.82\pm0.03}$ & $\mathbf{0.03\pm0.00}$ & $\mathbf{0.05\pm0.00}$ & $0.91\pm0.00$ & $12.65\pm0.04$ \\
BSA-TNP (embed+bias) & $-1.75\pm0.04$ & $\mathbf{0.03\pm0.00}$ & $\mathbf{0.05\pm0.00}$ & $0.90\pm0.01$ & $12.81\pm0.07$ \\
\bottomrule
\end{tabular}
\end{table}

\begin{table}[H]
\caption{Gneiting GP bias/embedding ablation. BSA-TNP variants compare embeddings, invariant bias, and both on the synthetic partially stationary spatiotemporal benchmark with a nonseparable Gneiting covariance.}
\label{tab:gneiting-gp-extended}
\begin{tabular}{lccccc}
\toprule
Name & NLL & MAE & RMSE & CVG@95\% & Runtime (min) \\
\midrule
TNP-D & $1.45\pm0.01$ & $0.82\pm0.01$ & $1.03\pm0.01$ & $0.94\pm0.00$ & $\mathbf{2.25\pm0.02}$ \\
PT-TE-TNP & $1.46\pm0.00$ & $0.83\pm0.00$ & $1.03\pm0.00$ & $\mathbf{0.95\pm0.00}$ & $7.37\pm0.03$ \\
BSA-TNP (embed) & $1.46\pm0.00$ & $0.83\pm0.00$ & $1.04\pm0.00$ & $0.94\pm0.00$ & $2.65\pm0.01$ \\
BSA-TNP (bias) & $1.39\pm0.00$ & $0.78\pm0.00$ & $0.99\pm0.00$ & $0.94\pm0.00$ & $3.09\pm0.03$ \\
BSA-TNP (embed+bias) & $\mathbf{1.38\pm0.00}$ & $\mathbf{0.77\pm0.00}$ & $\mathbf{0.98\pm0.00}$ & $\mathbf{0.95\pm0.00}$ & $3.09\pm0.01$ \\
\bottomrule
\end{tabular}
\end{table}

\newpage
\section{Invariance vs. Equivariance}\label{appendix:invariance-vs-equivariance}
\paragraph{Why invariance (not equivariance) for TNPs.}
In our functional view, a TNP approximates the posterior predictive map
\(\pi:(\vd_c,\vf_c,\vd_t)\mapsto \mathcal{L}\!\left(F(\vd_t)\mid F(\vd_c)=\vf_c\right)\).
When the underlying process is stationary (translation-invariant), the correct symmetry of this map is
\begin{equation*}
  \pi(g\!\act\!\vd_c,\vf_c,\,g\!\act\!\vd_t) \;=\; \pi(\vd_c,\vf_c,\vd_t),
\end{equation*}
i.e., \emph{invariance} under jointly translating both context and target inputs. By contrast, \emph{equivariance} is the property
\(h(g\!\act\!\vd)=g\!\act\!h(\vd)\), which is appropriate when the model's output is itself a field on a fixed canvas that should transform (e.g., ConvCNPs on a latent grid or PT-TNPs with pseudo-tokens). TNPs do not output a shifted copy of a function; they return the conditional law at queried coordinates. Hence stationarity implies that predictions at translated queries, given translated contexts, are \emph{identically distributed}, not shifted, and the right inductive bias for TNPs is \textbf{translation invariance of the posterior predictive map}, which we implement via \(G\)-invariant embeddings/biases rather than output equivariance.

\section{Experimental Setup}\label{appendix:exp}
All experiments were run on a single NVIDIA GTX 4090 24GB GPU. Each benchmark was run 5 times using a different seed for each run.

\paragraph{2D GPs:}\label{appendix:gp-setup} We base the 2D GP experiments on a 64x64 grid, even though when training and testing we uniformly sample from $[-2, 2]^2$ to enable the models to learn lengthscales that fall below the grid resolution. We sample the number of context points uniformly from integers in $[128, 512)$, which corresponds to approximately 3\%-12.5\% of pixels on a 64x64 grid. We test on a separate 1024 points, corresponding to 25\% of the points on a 64x64 grid. We use batch size 8 and observation noise of 0.1 for context points. We use a squared exponential kernel with lengthscale sampled from $\text{Beta}(3.0, 7.0)$, which has a mean and median of $\approx$ 0.3. This is a more challenging benchmark than most NPs use since 50\% of the lengthscales fall below 0.3 and only 10\% fall above 0.5. In practice we found that most NPs could easily learn lengthscales above 0.4-0.5. We do not sample the variance because the data can always be standardized using the variance. We train over 100K batches and validate every 10K steps on 5K batches. We use the AdamW optimizer with $\beta_1=0.9$, $\beta_2=0.999$, and weight decay 1e-4. We clip maximum gradient norms at 0.5 and use a cosine learning rate schedule which starts at 1e-4 and decays to 2e-5. The shifting and scaling 2D GP benchmarks use the models trained under this regime and are simply evaluated on new domains.

\paragraph{Spherical GPs with Rotational Invariance:}\label{appendix:rot-setup}
The spherical experiments are based on a 64x64 grid of (longitude, latitude) pairs. Three BSA-TNP variants are compared: a translation-invariant variant with RBF network-based bias, an SO(3)-invariant variant with the squared exponential geodesic bias given by $\sum_m a_m\exp(-b_m |d_{geo}(s,s')|^2)$ where $d_{geo}$ is the great-circle distance, and an unbiased variant that only embeds locations. Like the 2D GP setup, we sample the number of context points uniformly from integers in $[128, 512)$, representing approximately 3\% and 12.5\% of points on a 64x64 grid, and test on 1024 separate test points, representing 25\% of points. The locations are sampled uniformly from $[-10^\circ,10^\circ]^2$. For evaluation, these points are either (1) left unrotated, (2) rotated by $60^\circ$ north and $30^\circ$ east, or (3) rotated by $60^\circ$ north, $30^\circ$ east, and $20^\circ$ along the axis given by $(0^\circ,0^\circ), (180^\circ,0^\circ)$ in the original coordinate system. In terms of Euler angles, this corresponds to intrinsic "yxz" rotations by $(-60^\circ, 30^\circ, 0^\circ)$ and $(-60^\circ, 30^\circ, 20^\circ)$, respectively. The GP kernel is exponential with great-circle distance. The variance is fixed at 1.0 and the lengthscale is sampled from $\mathrm{InverseGamma}(3, 30)$. The larger scale relative to the 2D GP experiments is used to emphasize the difference between translations and spherical rotations --- as the scale decreases the curvature of the considered region becomes negligible and there is no significant difference between translations with scaling and SO(3) rotations. While RBF network-based biases perform well, geodesic network-based biases outperform as the scale increases and the rotation becomes more pronounced. \autoref{tab:rot-main} shows the results of this benchmark and \autoref{fig:rot-main} provides a visualization.

\paragraph{Susceptible-Infected-Recovered (SIR):}\label{appendix:sir-setup} For the SIR benchmark, we simulate tasks from a SIR model. It is governed by an infection rate, $\beta$, a recovery rate, $\gamma$, and the number of initial infections, $\omega$. We sample $\beta\sim\operatorname{Beta}(2, 8)$, $\gamma\sim\operatorname{InvGamma}(5, 0.4)$, and $\omega\sim\operatorname{randint}(1,5)$. The infection rate, $\beta$, is decreased as an inverse function of distance from the infected individual. In expectation, this parameter setting corresponds to an infection rate of 20\% upon exposure and a 10-day recovery period (similar to COVID-19). We roll out simulations for 25 steps and randomly sample steps to create tasks. We use a grid size of 64x64 on the domain $[-2, 2]^2$. Similar to the 2D GP benchmark, we sample the number of context points uniformly from integers in $[128, 512)$, which corresponds to approximately 3\%-12.5\% of pixels. We test on a separate 1024 points, corresponding to 25\% of the points on a 64x64 grid. We train over 100K batches, each with batch size 8, and validate every 10K steps on 5K batches. We test on 5K unseen batches. We use the AdamW optimizer with $\beta_1=0.9$, $\beta_2=0.999$, and weight decay 1e-4. We clip maximum gradient norms at 0.5 and use a cosine learning rate schedule which starts at 1e-4 and decays to 1e-5.

\paragraph{ERA5}\label{appendix:era5-setup} We largely follow the setup in \cite{pttnp} with ERA5 \cite{era5} surface air temperatures. To test learning over time and generalization across space, we train models on tasks from central Europe $[42^\circ\mathrm{N},\,53^\circ\mathrm{N}]\times[8^\circ\mathrm{E},\,28^\circ\mathrm{E}]$ and test them on tasks from northern Europe $[53^\circ\mathrm{N},\,62^\circ\mathrm{N}]\times[8^\circ\mathrm{E},\,28^\circ\mathrm{E}]$ and western Europe $[42^\circ\mathrm{N},\,53^\circ\mathrm{N}]\times[4^\circ\mathrm{W},\,8^\circ\mathrm{E}]$ over random 30-hour segments of time. When testing on northern Europe, we use western Europe as a validation set and select the best model for testing and vice versa when testing on western Europe. Unlike \cite{pttnp}, which inpaints several frames over time, we change the benchmark to forecast the weather in the next 6 hours. We also add ``hour of day'' to the feature set and increase the resolution from $0.5^\circ$ to $0.25^\circ$. All input data are standardized based on the training set, i.e. central Europe. The task consists of 4 context frames and 1 test frame, each separated by 6 hours. The inputs we use are \textbf{x}=(elevation, hour of day), \textbf{s}=(latitude, longitude), \textbf{t}=time, \textbf{f}=surface temperature. Each task consists of a 30x30 image or $7.5^\circ\times7.5^\circ$ at a resolution of $0.25^\circ$. We sample context points uniformly from integers in $[45, 225)$ from each time step in the context set, corresponding to approximately 5\% and 25\% of the number of pixels. The test set for each task consists of all 900 pixels from the target time step. We train on 100K batches, each of size 8, and validate every 5K steps on 5K batches. We test on 5K batches from the test region. Following \cite{pttnp}, we use the AdamW optimizer with $\beta_1=0.9$, $\beta_2=0.999$, and weight decay 1e-4. We clip maximum gradient norms at 0.5 and use a constant learning rate of 5.0e-4. The main script for this benchmark is \mintinline{bash}{dl4bi/benchmarks/meta_learning/era5.py}. It downloads the required ERA5 data automatically, but requires a \href{https://cds.climate.copernicus.eu/how-to-api}{CDS API key}.

\paragraph{Beijing Multi-City Air Quality:}\label{appendix:beijing-setup}
For this benchmark, we use the Beijing Multi-Site Air Quality dataset from the UCI repository \citep{beijing}. After merging in station coordinates, we drop rows with missing values, sort observations in temporal order, and use an 80\%/10\%/10\% chronological train-validation-test split. The fixed covariates are \textbf{x}=(year, month, day, hour, day of week, is weekend, temperature, pressure, dewpoint, rain, wind speed, wind direction), where wind direction is one-hot encoded. The spatial inputs are \textbf{s}=(latitude, longitude), the temporal input \textbf{t} is elapsed time in seconds since the start of the dataset, and the targets are \textbf{f}=(PM2.5, PM10). All numeric variables in \textbf{x}, \textbf{s}, \textbf{t}, and \textbf{f} are min-max scaled using transformers fit on the training split. Because this is a comparatively easy dataset, we train over only 10K batches, each of size 32. Most models overfit quickly, so we validate on 500 batches every 500 steps and use the model with the lowest validation score for testing. The number of context points is fixed at 624, corresponding to an intended 48-hour history for each of the 12 stations, padded by an additional 4 hours per station to account for gaps after rows with missing values are removed. The number of test points is fixed at 84, corresponding to an intended 6-hour forecast horizon plus 1 extra hour per station for the same reason. We test on 5K batches from the test split. We use the AdamW optimizer with $\beta_1=0.9$, $\beta_2=0.999$, and weight decay 1e-4. We clip maximum gradient norms at 0.5 and use a cosine learning rate schedule which starts at 1e-3 and decays to 1e-5. The main script for this benchmark is \mintinline{bash}{dl4bi/benchmarks/meta_learning/beijing_air_quality.py}. The \href{https://archive.ics.uci.edu/dataset/501/beijing+multi+site+air+quality+data}{Beijing Multi-City Air Quality} dataset must be downloaded, extracted, and the files in the \mintinline{bash}{PRSA_*} folder placed in \mintinline{bash}{dl4bi/benchmarks/meta_learning/cache/beijing_air_quality/}.

\paragraph{Gneiting Gaussian Process:}\label{appendix:gneiting-gp}
For this benchmark, we simulate tasks from a spatiotemporal GP on a $16\times16$ grid over the spatial domain $[-1,1]^2$ and 12 evenly spaced time points on $[0,1]$. Each task uses 5 consecutive time points, with the first 4 treated as context and the 5th as the forecast target. To avoid overfitting to a fixed lattice, the observations are sampled on an irregular spatial layout by uniformly drawing 256 locations in $[-1,1]^2$ and then randomly subsampling a different set of 8 to 15 context locations at each context time step. The test set consists of all 256 locations at the target time step. The covariance is given by the nonseparable Gneiting kernel
\[
\mathcal{K}\big((s,t),(s',t')\big)=\frac{\sigma^2}{g(|t-t'|)^\nu}\exp\!\left(-\left(\frac{\| (s-s')/\ell_s\|^2}{g(|t-t'|)^b}\right)^\gamma\right),\qquad g(u)=1+a u^{2\alpha},
\]
where the positive scale parameters are sampled log-uniformly and the bounded shape parameters uniformly: $\sigma^2\in[0.85,1.75]$, $\ell_s=(\ell_x,\ell_y)$ with each axis in $[0.05,0.15]$, $a\in[0.25,2.5]$, $\alpha\in[0.35,1.0]$, $b\in[0,0.75]$, $\nu\in[1.1,4.0]$, and $\gamma\in[0.55,1.0]$. These ranges keep the spatial correlation length well below the width-2 domain so that the field varies meaningfully across the 256 irregularly sampled locations, while still remaining locally smooth enough to be forecast from only 8 to 15 context points per time step. The temporal parameters $a$ and $\alpha$ span weak to strong temporal decorrelation, while $b$ and $\nu$ control how much time separation inflates effective spatial distance and attenuates marginal covariance. Restricting $\gamma$ to $[0.55,1.0]$ interpolates between rougher exponential-like and smoother squared-exponential-like decay without producing pathological samples. Observation noise is sampled log-uniformly from $[0.01,0.06]$, which perturbs the targets without overwhelming the latent field.

To make the process only partially stationary, we add a deterministic mean function of the form $6.0 + 0.9\,m(s,t)$, where $m$ is a weighted sum of shared basis functions. The per-task weights are sampled as a bias term in $[-0.15,0.15]$, a terrain coefficient in $[-0.75,0.75]$, sine and cosine clock weights in $[-0.55,0.55]^2$, a second-harmonic coefficient in $[-0.25,0.25]$, a terrain-time interaction weight in $[-0.35,0.35]$, a linear trend in $[-0.10,0.10]$, and a moving-hotspot amplitude in $[-0.55,0.55]$. The shared terrain is fixed across tasks via three Gaussian bumps plus a sinusoidal ripple, while the hotspot begins at $(-0.55,-0.25)$, moves with velocity $(1.05,0.55)$, and has width $(0.24,0.18)$. Keeping these deterministic coefficients on roughly the same scale as the GP standard deviation creates a benchmark that is only partially stationary: models must infer local covariance structure, but can still benefit from learning repeatable large-scale spatiotemporal patterns. We train for 50K batches with batch size 4, validate every 5K steps on 500 batches, and test on 500 unseen batches. We use AdamW with $\beta_1=0.9$, $\beta_2=0.999$, weight decay 1e-4, gradient clipping at 0.5, and a cosine learning rate schedule from 1e-3 to 1e-5.

\section{Complexity and Estimated FLOPS}\label{appendix:scaling}
In \autoref{tab:complexity}, we provide complexity estimates for the primary models compared in this paper. In \autoref{tab:flops}, we provide estimated training and inference GFLOPs for each model. These are estimated using \href{https://docs.jax.dev/en/latest/aot.html}{JAX's cost analysis} on functions lowered to HLO and compiled. These figures change from benchmark to benchmark based on the batch size and distribution of test and context points. We provide estimates for 2D GPs since this was our most generic benchmark. Notably, while ConvCNP has the lowest estimated GFLOPs for training and inference, this does not translate into wall-clock speed, likely due to poorer high-bandwidth memory (HBM) accesses and the large induced grid that must be encoded and decoded for every task.

\begin{table}[h!]
\caption{Time and space complexity. $n_c$ is the number of context points, $n_t$ is number of test points, $n_i$ is number of inducing points, and $n_b$ is block size.}
\label{tab:complexity}
\begin{center}
\begin{tabular}{lcc}
\toprule
\textbf{Model} & \textbf{Time} & \textbf{Space} \\
\midrule
ConvCNP & $\mathcal{O}(n_cn_i + n_in_t)$ & $\mathcal{O}(n_cn_i + n_in_t)$ \\
TNP-D & $\mathcal{O}((n_c+n_t)^2)$ & $\mathcal{O}((n_c + n_t)^2)$ \\
PT-TE-TNP & $\mathcal{O}(k(n_c+n_t))$ & $\mathcal{O}(k(n_c + n_t))$ \\
BSA-TNP & $\mathcal{O}(n_c^2 + n_cn_t)$ & $\mathcal{O}(n_b^2)$ \\
\bottomrule
\end{tabular}
\end{center}
\end{table}

\begin{table}[h!]
\caption{GFLOPs estimates on 2D GP and wall-clock times. GFLOPs are estimated using JAX's cost analysis on functions lowered to HLO and then compiled for a device.}
\label{tab:flops}
\begin{center}
\begin{tabular}{lcccc}
\toprule
Model & Train GFLOPS & Infer GFLOPS & Train (min) & Infer (batch/s) \\
\midrule
ConvCNP & $\mathbf{88}$ & $\mathbf{23}$ & $121.10\pm1.02$ & $27.56\pm0.04$  \\
TNP-D & $217$ & $72$ & $65.27\pm0.04$ & $63.95\pm0.40$ \\
PT-TE-TNP & $310$ & $101$ & $72.40\pm0.08$ & $73.35\pm0.77$ \\
BSA-TNP & $94$ & $31$ & $\mathbf{36.98\pm0.09}$ & $\mathbf{103.74\pm0.43}$ \\
\bottomrule
\end{tabular}
\end{center}
\end{table}

\section{Convergence}\label{appendix:convergence}
To visualize how bias can accelerate convergence, we provide the validation NLL versus iteration in \autoref{fig:sir-convergence} from the SIR benchmark, which demonstrates that \textit{G}-invariant bias permits almost immediate convergence to the optimal solution. Furthermore, this performance was maintained under shifting and scaling, unlike TNP-D.

\begin{figure}[h!]  
  \centering
  \includegraphics[width=0.50\linewidth]{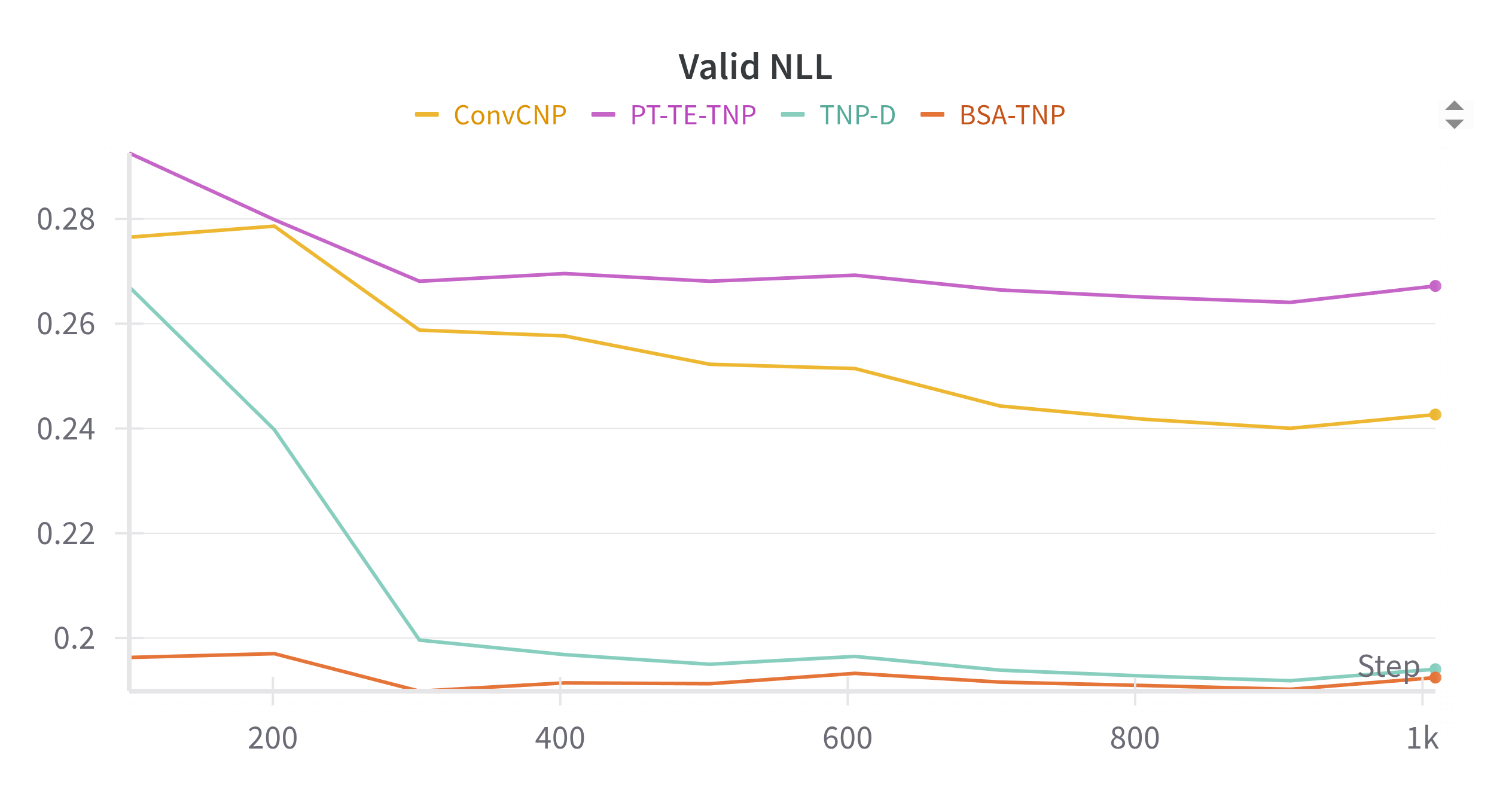}
  \caption{Validation NLL on the SIR benchmark for seed 91. All seeds exhibited the same pattern.}
  \label{fig:sir-convergence}
\end{figure}

\section{Biased Flex Attention (BFA) vs. Biased Scan Attention (BSA)}\label{appendix:flex-vs-scan}
In order to evaluate the efficiency of Biased Scan Attention (BSA) in JAX vs. Biased Flex Attention (BFA) in PyTorch, we vary the sequence length $L$ for keys and queries while using a batch size of 32, 4 attention heads, 64-dimensional embeddings, and 2-dimensional space, i.e. $(B=32, H=4, D=64, D_s=2)$, corresponding to the default case for most benchmarks.

Each experiment was run 100 times for each sequence length and the means for the forward and backward passes are plotted below. We see that while BFA is slower than BSA for shorter sequences, it becomes about 5 times faster on the forward pass, i.e. during inference. However, for the backward pass, BFA is 14-170 times slower than BSA. \textbf{This means that to train on 100,000 batches at $L=4096$ (in a single-layer transformer), BFA would take approximately 7 days while BSA would take 0.5 days, and both would run inference on a test set in less than a tenth of a second.}

\begin{figure}[h!]
  \centering
  \begin{subfigure}[b]{0.48\linewidth}
    \centering
    \includegraphics[width=\linewidth]{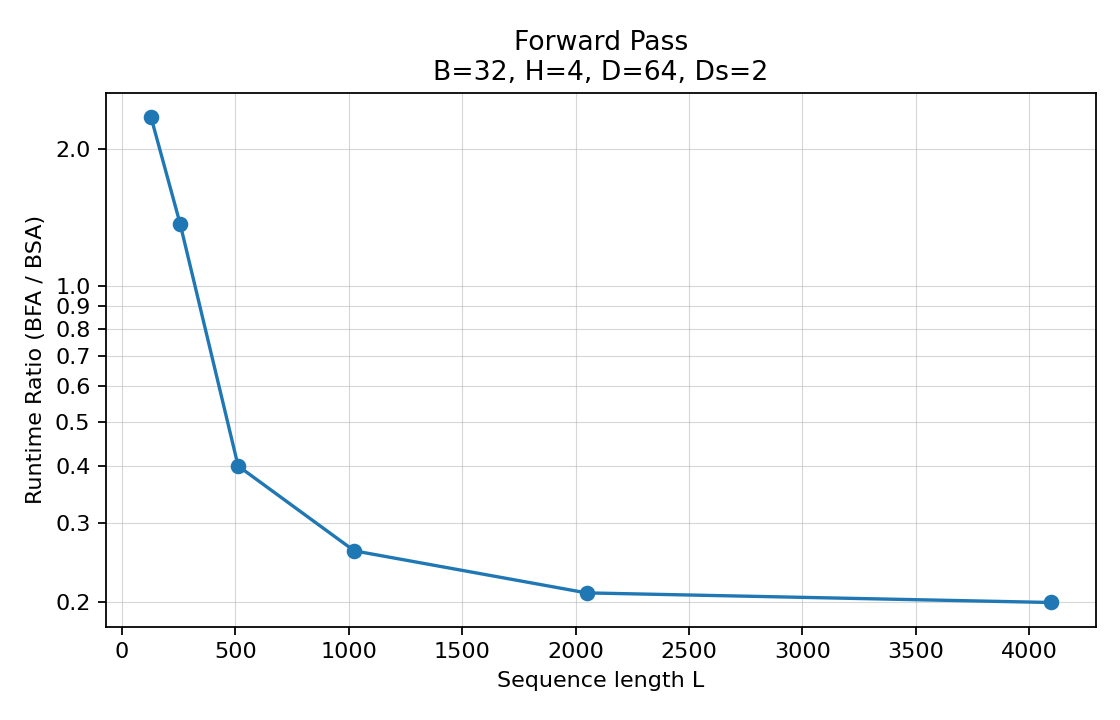}
    \caption{Forward Pass Runtime Ratio (BFA / BSA)}
    \label{fig:bfa-vs-bsa-forward}
  \end{subfigure}
  \hfill
  \begin{subfigure}[b]{0.48\linewidth}
    \centering
    \includegraphics[width=\linewidth]{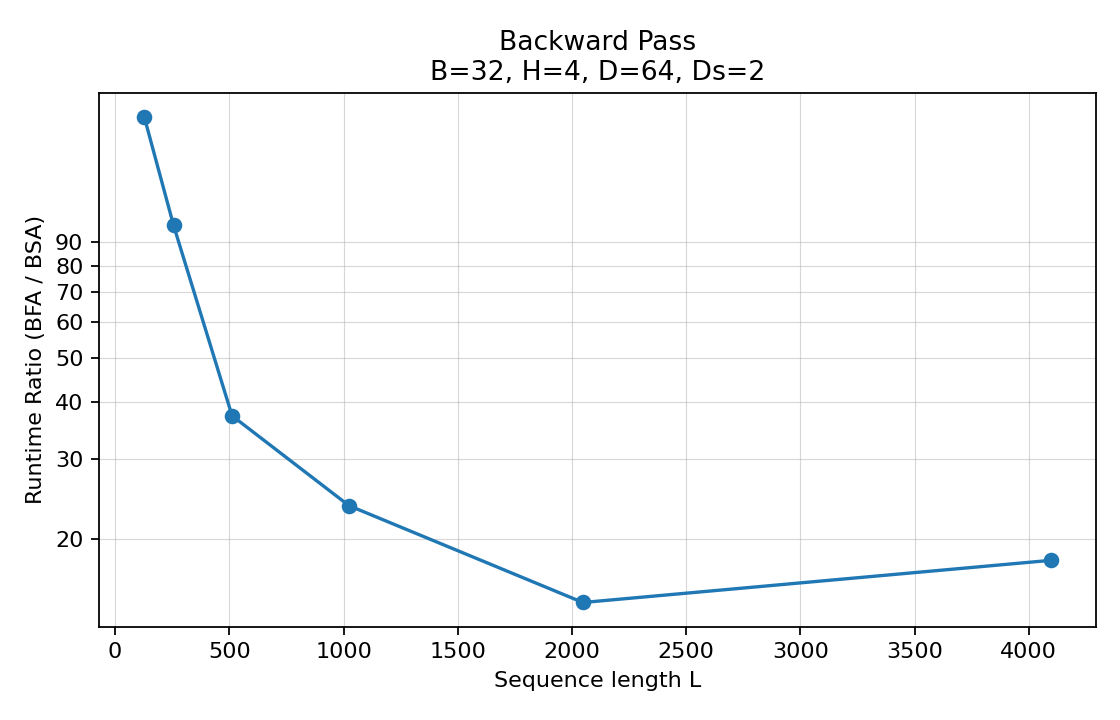}
    \caption{Backward Pass Runtime Ratio (BFA / BSA)}
    \label{fig:bfa-vs-bsa-backward}
  \end{subfigure}
  \caption{Comparison of runtime ratios for forward and backward passes.}
  \label{fig:bfa-vs-bsa}
\end{figure}
\clearpage

\newpage
\section{Multiresolution 2D GPs}\label{appendix:multires-gp}
The following figure visualizes a batch of multiresolution 2D GP tasks with predictions from BSA-TNP. By using a coarser resolution for the area surrounding the target region, the number of pixels is reduced by $\approx$ 78\%.
\begin{figure}[h!]  
  \centering
  \includegraphics[scale=0.125]{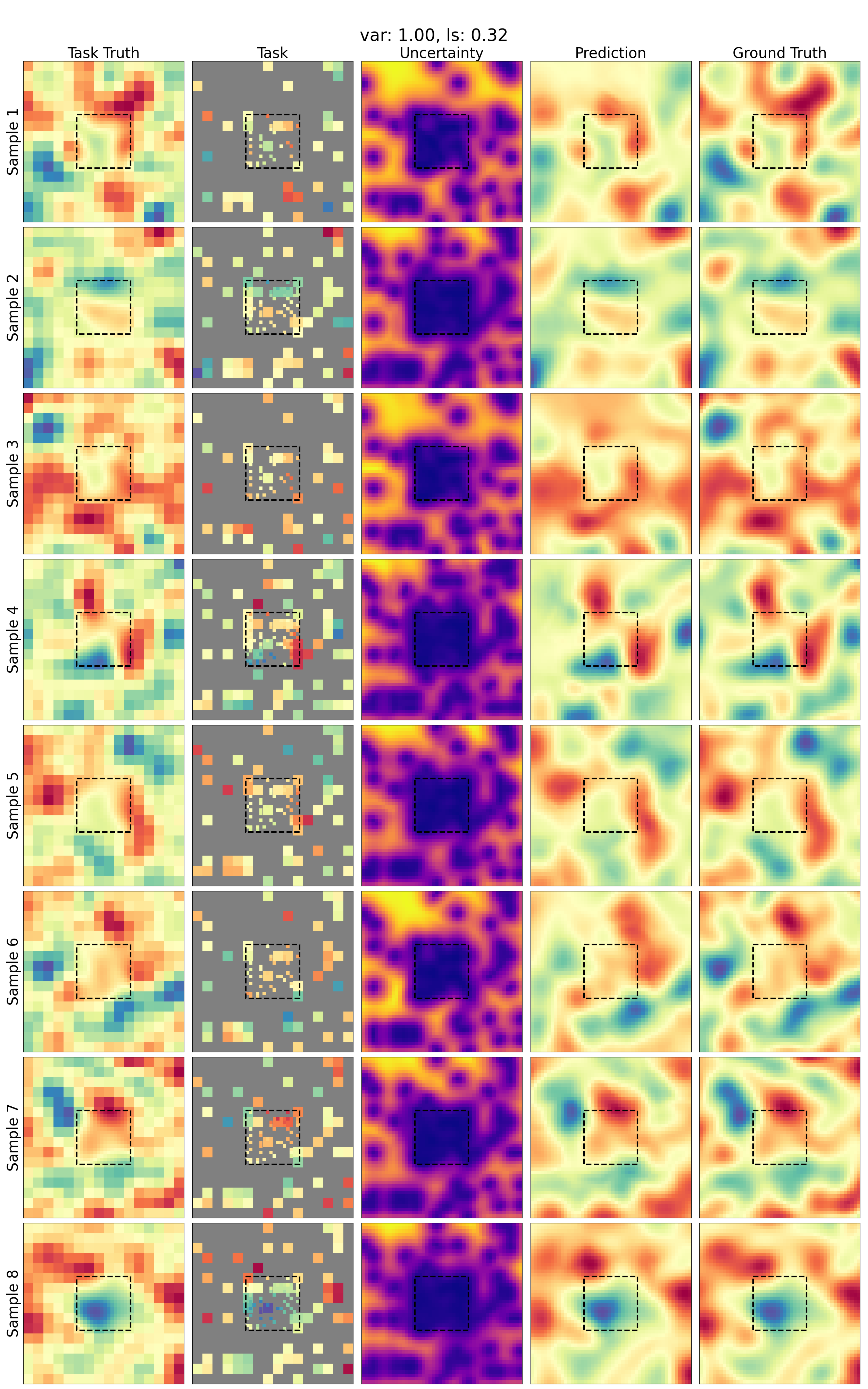}
  \caption{An example of a batch of multiresolution 2D GP tasks. Predictions are from BSA-TNP.}
  \label{fig:mulitres-gp}
\end{figure}

\newpage
\section{Group Invariance Theory Proof}\label{appendix:G-inv-iff-proof}

In this section we prove Theorem \ref{theorem:iff}, using the same notation and definitions introduced in Section \ref{sec:G-invariance}.

\begin{proof}
  We assume a probability space $(\Omega, \calF, \P)$ with the stochastic process $F$ taking values in $Y = \reals^n$ endowed with the Borel $\sigma$-algebra, so that regular conditional distributions exist.
  Let $\mu_\vd$ denote the law of $F(\vd)$, and $\mu_{\vd_t \given \vd_c,
  \vf_c}$ the conditional law of $F(\vd_t) \given F(\vd_c) = \vf_c$,
  that is the probability measures defined by $\mu_\vd(A) = \P(F(\vd) \in A)$
  and $\mu_{\vd_t \given \vd_c, \vf_c}(A) = \P(F(\vd_t) \in A \given F(\vd_c) = \vf_c)$. Details on the definitions of $\sigma$-algebras, measures, Lebesgue-Stieltjes integration, and conditional probabilities can be found e.g.~in \cite{klenke_probability_2020}.
  
  $\implies$: Assume $F$ is $G$-stationary, then for all measurable $A \subseteq Y^{|\vd_t|}$ and $B \subseteq Y^{|\vd_c|}$,
  by (1) consistency of regular conditional distributions, and (2) $G$-stationarity of $F$ so that $\mu_\vd = \mu_{g \act \vd}$, it follows that:
  \begin{align*}
  \int_{\vf_c\in B} \mu_{\vd_t \given \vd_c, \vf_c}(A) \dmu_{\vd_c}(\vf_c) 
  &\stackrel{(1)}= \mu_{(\vd_t, \vd_c)}(A \times B) \\
    &\stackrel{(2)}= \mu_{g\act (\vd_t, \vd_c)}(A \times B) \\
    &\stackrel{(1)}= \int_{\vf_c\in B} \mu_{g \act \vd_t \given g \act \vd_c, \vf_c}(A) \dmu_{g \act \vd_c}(\vf_c) \\
    &\stackrel{(2)}= \int_{\vf_c\in B} \mu_{g \act \vd_t \given g \act \vd_c, \vf_c}(A) \dmu_{\vd_c}(\vf_c).
  \end{align*}
  This equality holds for any measurable $B$. By Lemma \ref{lemma:cond_eq} we get that for all $A$, for $\mu_{\vd_c}$-almost-every $\vf_c$, 
  $\mu_{g \act \vd_t \given g \act \vd_c, \vf_c}(A) = \mu_{\vd_t \given \vd_c, \vf_c}(A)$.
  Lemma \ref{lemma:swap} allows us to swap the quantifiers and get that for $\mu_{\vd_c}$-almost-every $\vf_c$ the laws 
  $\mu_{g \act \vd_t \given g \act \vd_c, \vf_c}$ and $\mu_{\vd_t \given\vd_c, \vf_c}$ are equal. 
  Conditional distributions are defined with respect to equality almost everywhere in the conditioning argument, therefore we may conclude that
  the distributions of $F(\vd_t) \given F(\vd_c) = \vf_c$ and $F(g \act \vd_t) \given F(g \act \vd_c) = \vf_c$ are equal, and thus, the posterior predictive map $\pi$ is $G$-invariant.

 $\impliedby$: Assume the posterior predictive map $\pi$ is $G$-invariant. Let $\vd_c$ and $\vf_c$ be the empty set. We immediately get that \[F(g\act\vd_t) =\pi(g \act \vd_c, \vf_c, g \act \vd_t) = \pi(\vd_c, \vf_c, \vd_t)=F(\vd_t)\]
 and hence, $F$ is $G$-invariant.
 
\end{proof}
\begin{lemma}\label{lemma:cond_eq}
  If $\int_E f \dmu = \int_E g \dmu$ for all measurable sets $E$ and a non-negative measure $\mu$, then $f = g$ $\mu$-almost-everywhere.
\end{lemma}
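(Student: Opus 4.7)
The plan is to follow the standard measure-theoretic argument that compares $f$ and $g$ on the sets where one strictly exceeds the other, using the hypothesis to force those sets to have $\mu$-measure zero. Concretely, I would introduce the measurable sets
\begin{equation*}
A^{+} = \{x : f(x) > g(x)\}, \qquad A^{-} = \{x : f(x) < g(x)\},
\end{equation*}
and decompose $A^{+}$ as the countable union $A^{+} = \bigcup_{n\ge 1} A_n^{+}$, where $A_n^{+} = \{x : f(x) \ge g(x) + 1/n\}$ (and symmetrically for $A^{-}$). Showing $\mu(A^{+}) = \mu(A^{-}) = 0$ then gives $f = g$ $\mu$-almost-everywhere, which is precisely the conclusion.

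For each fixed $n$, the hypothesis applied to $E = A_n^{+}$ gives $\int_{A_n^{+}} f \dmu = \int_{A_n^{+}} g \dmu$. When these integrals are finite I can subtract to obtain $\int_{A_n^{+}}(f - g)\dmu = 0$, and because $f - g \ge 1/n$ pointwise on $A_n^{+}$, monotonicity of the integral yields $\mu(A_n^{+})/n \le 0$, hence $\mu(A_n^{+}) = 0$. Countable subadditivity then gives $\mu(A^{+}) = 0$, and the argument for $A^{-}$ is identical with the roles of $f$ and $g$ swapped.

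The only real obstacle is integrability: the subtraction $\int f - \int g$ requires that at least one of the two integrals over $A_n^{+}$ be finite, which is not stated as a hypothesis. In the usage of this lemma within the proof of Theorem~\ref{theorem:iff}, the integrands are probability values $\mu_{\cdot\given\cdot,\vf_c}(A) \in [0,1]$ integrated against the probability measure $\mu_{\vd_c}$, so both sides are automatically bounded by $1$ and the subtraction is unambiguous. To make the lemma self-contained at the stated level of generality, I would handle unbounded $f,g$ by a standard truncation: replace $A_n^{+}$ by $A_{n,M}^{+} = A_n^{+} \cap \{g \le M\}$, apply the same argument using $\int_{A_{n,M}^{+}} f \dmu = \int_{A_{n,M}^{+}} g \dmu \le M\,\mu(A_{n,M}^{+}) < \infty$ whenever $\mu(A_{n,M}^{+}) < \infty$, and then pass to the limit $M \to \infty$ (together with an exhaustion by sets of finite $\mu$-measure if $\mu$ is only $\sigma$-finite). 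I would state the lemma's use in the paper assumes the bounded setting so this technicality can be handled in a single remark rather than cluttering the main argument.
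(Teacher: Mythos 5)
Your proof follows exactly the paper's route: the same sets $E_n=\{x: f(x)\ge g(x)+1/n\}$, the same subtract-and-lower-bound step giving $0=\int_{E_n}(f-g)\dmu\ge \tfrac1n\mu(E_n)$, and the same countable-union conclusion with the roles of $f$ and $g$ swapped for the reverse inequality. Your integrability caveat is a fair observation rather than a divergence — the paper tacitly assumes $f,g$ integrable (its proof mentions this only in passing), and as you note, in the lemma's sole application the integrands are conditional probabilities in $[0,1]$ integrated against a probability measure, so the subtraction is automatically licit and your truncation refinement, while correct, is not needed there.
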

\begin{proof}
  Consider the set $E_n = \{ x : f(x) - g(x) \ge \frac1n \}$, which is measurable since $f, g$ are integrable. Then $0 = \int_{E_n} (f - g) \dmu \ge \frac1n \mu(E_n) \ge 0$, so it must be that $\mu(E_n) = 0$. As a countable union of null sets, it must be that $\{x : f(x) - g(x) > 0\} = \bigcup_{n=1}^\infty E_n$ is null. By a symmetric argument, it must be that $\{x : g(x) - f(x) > 0 \}$ is null. Therefore $f(x) = g(x)$ for $\mu$-almost-every $x$.
\end{proof}
\begin{lemma}\label{lemma:swap}
    Let $\mu$ be a probability measure over $\calB(\reals^d)$ and 
    $f, g: \calB(\reals^k) \times \reals^d \to [0,1]$ Markov kernels, that is functions, such that
    \begin{enumerate}
        \item for $\mu$-almost-every $x$, $f(\cdot, x), g(\cdot, x)$ are probability measures,
        \item for all $A$, $f(A, \cdot), g(A, \cdot)$ are measurable functions.
    \end{enumerate}

    If for all $A$, for $\mu$-almost-every $x$, $f(A, x) = g(A, x)$,
    then $f(\cdot, x) = g(\cdot, x)$ for $\mu$-almost-every $x$.
\end{lemma}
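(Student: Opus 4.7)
The plan is to overcome the uncountability of $\calB(\reals^k)$ by reducing the claimed equality to a countable collection and then lifting it back to the full $\sigma$-algebra via a uniqueness-of-measures argument.

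First, I would fix a countable $\pi$-system $\mathcal{P}$ that generates $\calB(\reals^k)$, for example the collection of open boxes in $\reals^k$ with rational endpoints (together with the empty set). Second, for each $A \in \mathcal{P}$, the hypothesis gives a $\mu$-null set $N_A = \{x : f(A,x) \neq g(A,x)\}$. Let $N_0$ be the $\mu$-null set off of which $f(\cdot,x)$ and $g(\cdot,x)$ are guaranteed to be probability measures, and set
\[
N = N_0 \cup \bigcup_{A \in \mathcal{P}} N_A.
\]
Because $\mathcal{P}$ is countable, $N$ is a countable union of $\mu$-null sets, hence $\mu(N) = 0$.

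Third, for every $x \notin N$, both $f(\cdot, x)$ and $g(\cdot, x)$ are probability measures on $\calB(\reals^k)$ and they agree on $\mathcal{P}$. Since $\mathcal{P}$ is a $\pi$-system generating $\calB(\reals^k)$, Dynkin's $\pi$-$\lambda$ theorem (the uniqueness statement for probability measures determined by a generating $\pi$-system) forces $f(A, x) = g(A, x)$ for every $A \in \calB(\reals^k)$. Thus $f(\cdot, x) = g(\cdot, x)$ for $\mu$-almost-every $x$, which is exactly the quantifier swap we wanted.

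The main obstacle is precisely what motivates the lemma in the first place: $\calB(\reals^k)$ is uncountable, so a naive union of the $N_A$ over all measurable $A$ need not be $\mu$-null and could even fail to be measurable. The trick that unlocks the proof is recognizing that probability measures on $\reals^k$ are determined by their values on a countable generating $\pi$-system, allowing the countable-union argument to do its work and the $\pi$-$\lambda$ theorem to extend agreement from $\mathcal{P}$ to all of $\calB(\reals^k)$.
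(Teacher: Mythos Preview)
Your proposal is correct and follows essentially the same route as the paper: both arguments pick the countable family of rational open boxes, take the countable union of the corresponding null sets together with the exceptional set where $f(\cdot,x)$ or $g(\cdot,x)$ fails to be a probability measure, and then for each $x$ outside this null set invoke a uniqueness-of-measures principle to extend agreement from the boxes to all of $\calB(\reals^k)$. The only cosmetic difference is that the paper phrases the last step by asserting that $\mathcal{E}=\{B:f(B,x)=g(B,x)\}$ is a $\sigma$-algebra containing the generators, whereas you (more precisely) invoke Dynkin's $\pi$--$\lambda$ theorem; since $\mathcal{E}$ is in general only guaranteed to be a $\lambda$-system, your formulation is the cleaner one, but the underlying idea is identical.
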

\begin{proof}
    First consider sets of the form $A = (a_1, b_1) \times \ldots \times (a_k, b_k)$ with $a_i, b_i \in \mathbb{Q}$ for all $i = 1, \ldots, k$, that is, open rectangles in $\reals^k$ with rational coordinates,
    and enumerate this countable family as $A_n, n \in \naturals$. By assumption, $E_n = \{ x : f(A_n ,x) \neq g(A_n, x) \}$ and $F= \{x : f(\cdot, x), g(\cdot, x) \text{ are not probability measures} \}$ are null with respect to $\mu$,
    so $N := \bigcup_{n \in \naturals} E_n \cup F$ is null. 

    Now for fixed $x \in \reals^d \setminus N$, consider the set $\mathcal{E} = \{B \in \calB(\reals^k) : f(B, x) = g(B, x) \}$. Since $f(\cdot, x), g(\cdot, x)$ are measures, $\mathcal{E}$ is a $\sigma$-algebra.
    Moreover, by definition of $N$, $\mathcal{E}$ contains the set of all rational open rectangles in $\reals^k$. Since these rectangles generate $\calB(\reals^k)$, it must be that $\calB(\reals^k) \subseteq \mathcal{E}$ and hence the measures $f(\cdot, x)$ and $g(\cdot, x)$ are equal. Since this holds for all $x \in \reals^k \setminus N$, the proof is complete.
\end{proof}
\section{BSA-TNP G-Invariance Proof}\label{appendix:BSA-inv-proof}
In this section we prove Theorem \ref{theorem:BSA-inv}, using the same notation, definitions, and BSA-TNP architecture module names introduced in Sections \ref{sec:G-invariance}, \ref{sec:bsa-tnp}.

\begin{proof}
    This follows by tracing $(g \act \vd_c, g \act \vd_t)$ through the BSA-TNP architecture.
    \begin{enumerate}
        \item By the assumption of the theorem $\textbf{Embed}_\varphi$ is $G$-invariant in $\vd_c, \vd_t$. Here, $\varphi$ represents the collection of all embedding parameters.
        \item The kernels used in the attention bias are $G$-invariant. As no other calculation within the attention mechanism involves $\vd_{\{c,t\}}$, we get that $\mathrm{BSA}$ is $G$-invariant. Thus, $\mathrm{BSA}(\qs, \vd_q, \ks,\vd_k) = \mathrm{BSA}(\qs, g \act \vd_q, \ks, g\act \vd_k)$. Consequently, 
        $\ve_c' = \mathrm{BSA}(\ve_c, g \act \vd_c, \ve_c, g\act \vd_c) = \mathrm{BSA}(\ve_c,  \vd_c, \ve_c,  \vd_c)$ and $\ve_t' = \mathrm{BSA}(\ve_t, g \act \vd_t, \ve_c, g\act \vd_c) = \mathrm{BSA}(\ve_t,  \vd_t, \ve_c,  \vd_c)$. As a result, the KRBlock's output $\ve'_{\{c,t\}}$ is $G$-invariant with respect to $\vd_c, \vd_t$.
        \item The projection head only takes the encoding $\ve'_t$ output by the final KRBlock, and is thus agnostic to $\vd_c, \vd_t$.
    \end{enumerate}
    By the above, BSA-TNP consists only of $G$-invariant operations: $\textbf{Embed}_\varphi$, followed by an arbitrary number of KRBlocks, and the projection head, and therefore stacking these operations results in a $G$-invariant model in $\vd_c, \vd_t$.
\end{proof}

\end{document}